\documentclass[a4paper,fleqn]{cas-dc}

\usepackage[authoryear,longnamesfirst]{natbib}

\usepackage{amsmath,amsfonts,amssymb}
\usepackage{amsthm}
\usepackage{subcaption}
\usepackage{algorithm}
\usepackage{algpseudocode}

\theoremstyle{plain}
\newtheorem{proposition}{Proposition}

\providecommand{\linkable}[1]{\url{#1}}

\begin{document}
\let\WriteBookmarks\relax
\makeatletter\setlength{\@fptop}{0pt}\makeatother

\renewcommand{\topfraction}{0.95}
\renewcommand{\bottomfraction}{0.95}
\renewcommand{\textfraction}{0.05}
\renewcommand{\floatpagefraction}{0.75}
\renewcommand{\dbltopfraction}{0.95}
\renewcommand{\dblfloatpagefraction}{0.75}
\setcounter{topnumber}{4}
\setcounter{bottomnumber}{4}
\setcounter{totalnumber}{6}
\setcounter{dbltopnumber}{4}

\shorttitle{Diagnosing Shape-Prior Shortcuts in Long-Range Single-Shot FPP}
\shortauthors{A. Haroon et~al.}

\title[mode=title]{Diagnosing Shape-Prior Shortcuts in Long-Range Single-Shot Fringe Projection Profilometry}

\author[1]{Adam Haroon}
\ead{aharoon@iastate.edu}
\credit{Conceptualization, Methodology, Software, Validation, Formal Analysis, Investigation, Writing -- Original Draft, Writing -- Review \& Editing, Visualization, Project Administration}

\author[1]{Anush Lakshman}
\ead{anushl@iastate.edu}
\credit{Data Curation, Writing -- Review \& Editing}

\author[1]{Cody Fleming}
\ead{flemingc@iastate.edu}
\credit{Supervision, Resources, Writing -- Review \& Editing}

\author[2]{Beiwen Li}
\cormark[1]
\ead{beiwen.li@uga.edu}
\credit{Supervision, Writing -- Review \& Editing}

\affiliation[1]{organization={Department of Mechanical Engineering, Iowa State University},
            addressline={2529 Union Drive},
            city={Ames},
            postcode={50011},
            state={Iowa},
            country={USA}}

\affiliation[2]{organization={College of Engineering, University of Georgia},
            addressline={597 D. W. Brooks Drive},
            city={Athens},
            postcode={30602},
            state={Georgia},
            country={USA}}

\cortext[1]{Corresponding author}

\begin{abstract}
Learning-based single-shot fringe projection profilometry (FPP) has been studied almost entirely at close range, and the networks used are evaluated only on aggregate error, leaving open whether they recover depth from fringe phase or from object-level shape cues that correlate with depth. This paper diagnoses that question mechanistically in the long-range regime (standoff beyond 1~m). Using FPP-ML-Bench, an open photorealistic synthetic benchmark (15{,}600 fringe images, 50 objects at 1.5--2.1~m), we first formalize why the single-shot fringe-to-depth mapping is more severely ill-posed at long range: it is non-injective without fringe-order information, and the depth error from an incorrect fringe order grows as $Z^2$ in the working distance. Systematic ablations, extended with a multi-frame study, establish a best UNet baseline at 14.54~mm object mean absolute error (MAE), 18\% of the 80~mm object depth range, with only a 1.9$\times$ spread across four architectures, indicating a representational rather than a capacity-bound limit. A mechanistic interpretability study, the first applied to an FPP network, localizes the cause: linear probing shows edges are 2.82$\times$ more decodable than depth, Grad-CAM shows attention favoring boundaries over fringes by 1.28$\times$, and an in-range flat-plane test collapses a featureless plane to background depth despite valid fringes. The baseline solves the task via object-boundary shape priors rather than fringe-phase decoding. Because the shortcut is a hypothesis-space property, additional data or larger models will not remove it, motivating an architectural repair that removes the shape-prior solution by construction.
\end{abstract}


\begin{keywords}
fringe projection profilometry \sep long-range structured light \sep single-shot reconstruction \sep machine learning \sep synthetic data \sep deep learning \sep 3D reconstruction \sep NVIDIA Isaac Sim \sep benchmarking \sep mechanistic interpretability \sep shortcut learning
\end{keywords}

\maketitle


\section{Introduction}
\label{sec:intro}

Fringe projection profilometry (FPP) recovers dense, non-contact 3D surface measurements at sub-millimeter accuracy and is used across surface inspection~\citep{qian2021high,deng2016three}, robotic scanning~\citep{haroon2024autonomous,wang2024robotic}, and manufacturing process control~\citep{zhang2023machine,zhang2022systematic}. This accuracy derives from multi-step phase-shifting acquisition~\citep{zhang2016high,geng2011structured}, in which several phase-shifted fringe patterns are projected and analytically combined to recover depth. The same multi-step capture that delivers this fidelity requires the scene, the camera-projector geometry, and the illumination to remain stable across the full sequence, restricting traditional FPP to controlled tabletop setups. Advances in machine and deep learning have therefore motivated \emph{single-shot} reconstruction, in which depth is recovered from one fringe image rather than from a phase-shifted sequence~\citep{zuo2022deep,van2019deep,nguyen2020single,wang2021single,ikeda2025deep,li2025deep,zhu2022hformer,balasubramaniam2023single,wang2025end}, lifting this stability constraint and enabling real-time measurement under non-stationary conditions such as moving objects, perturbed sensor positions, or variable lighting. This active single-shot literature has, however, concentrated almost exclusively on the short-range regime below 1~m standoff, with endoscopic~\citep{zuo2025deep} and tabletop~\citep{wang2021single} setups being typical. Extending FPP to real-world deployments where non-stationarity is inherent, such as inspecting objects too large to bring to a scanner, scanning scenes that require wider fields of view, or measuring targets the sensor cannot physically approach, requires long-range capability.

Yet long range is the harder regime, by orders of magnitude. The close-range regime in which current methods operate spans a narrow range of geometries: endoscopic and microscopic systems below 100~mm standoff, tabletop systems at 300--800~mm, and high-speed dynamic measurements that exploit the same close-range geometry to track motion. Wang et al.'s graphics-augmented SSSR-FPP~\citep{wang2021single}, Nguyen et al.'s structured-light DCNN~\citep{nguyen2020single}, Van der Jeught and Dirckx's single-shot SLP~\citep{van2019deep}, Ikeda et al.'s color-fringe approach~\citep{ikeda2025deep}, Li et al.'s inner-shifting-phase encoding~\citep{li2025deep}, Zhu et al.'s Hformer~\citep{zhu2022hformer}, and Balasubramaniam and Li's data-storage measurement~\citep{balasubramaniam2023single} all target this close-range regime. At long range the physics is qualitatively different. Structured-light signal-to-noise ratio degrades as $1/r^2$, fringe contrast drops as the projected pattern spreads, and depth resolution at fixed baseline degrades quadratically with range. The same physics also produces a fringe-order ambiguity whose induced depth error scales as $Z^2$ in the working distance; Section~\ref{sec:singleshot_theory} formalizes both consequences and pinpoints why the single-shot fringe-to-depth mapping is more severely ill-posed at long range than at close range.

Ground truth at long range is itself unsolved. Acquiring accurate physical ground truth at standoff $>1$~m is hard for reasons independent of any learning method. Reference measurements at these distances typically require coordinate-measuring machines or terrestrial laser scanners with their own millimeter-scale uncertainties, and registering those measurements to the camera frame with sub-millimeter accuracy is nontrivial. The regime is data-starved as a direct consequence. A validated synthetic pipeline sidesteps this. VIRTUS-FPP~\citep{HaroonVIRTUS2025} demonstrates a photorealistic virtual FPP framework in NVIDIA Isaac Sim whose fringe contrast, phase-unwrapping behavior, and depth-reconstruction accuracy are validated against physical measurements; building the benchmark on this validated framework makes long-range ground truth tractable at the scale ML training requires. The novelty of the regime and the difficulty of physical ground truth are therefore coupled: the simulator is the methodological move that makes the regime accessible at all.

Beyond the long-range gap, single-shot ML FPP as a whole lacks the shared infrastructure that mainstream computer vision takes for granted~\citep{deng2009imagenet,lin2014microsoft}. Across the literature, normalization conventions~\citep{feng2021calibration,ikeda2025deep,nguyen2020single}, loss functions~\citep{wang2025end,ikeda2025deep,li2025deep,wang2021single,zhu2022hformer}, evaluation regions, and aggregate metrics all vary, and cross-study comparison is therefore unreliable. Prior synthetic-data efforts~\citep{zheng2020fringe,ueda2021fringe,zhang2023measurement} demonstrate photorealistic FPP simulation but do not release the resulting datasets, limiting their utility as shared benchmarks. The recent single-shot structured-light benchmark of Evans et al.~\citep{evans2024benchmark} provides an open real-data dataset, but on a calibration target with random surface variations rather than object-diverse self-occluding geometries. As a result, no open-source photorealistic synthetic benchmark for single-shot FPP is currently available, and none of the existing datasets cover object-diverse self-occluding geometries at long-range standoff. Explicit reporting conventions for normalization, loss, masking, and training-set composition are similarly absent.

Within the single-shot literature, methods are trained end-to-end with pixelwise regression and evaluated only on aggregate reconstruction error~\citep{zuo2022deep,ikeda2025deep,nguyen2020single,wang2021single}. This leaves open whether they recover depth from the phase structure of the fringe pattern or from object-level shape cues that happen to correlate with depth in the training distribution. This is the shortcut-learning hypothesis of Geirhos et al.~\citep{geirhos2020shortcut}, and in metrology it has direct consequences: a shape-prior network will not generalize to novel geometries and cannot substitute for phase-based reconstruction even when aggregate error looks competitive. The residual error single-shot networks hit at long range may reflect this representational issue rather than capacity or data scarcity, but the question has not been adjudicated mechanistically.

We address the interpretive gap through the first mechanistic diagnosis of a single-shot FPP network, built on FPP-ML-Bench~\citep{haroon2026fppml}, an open photorealistic synthetic benchmark for single-shot FPP constructed on the VIRTUS-FPP framework (15{,}600 fringe images, 50 objects at 1.5--2.1~m standoff) with a standardized object/background/overall evaluation protocol we show can change reported object-only error by nearly an order of magnitude. Relative to the benchmark as originally introduced~\citep{haroon2026fppml}, this paper adds three elements: a formal analysis of single-shot fringe-order ambiguity and its long-range depth-error scaling (Section~\ref{sec:singleshot_theory}), a multi-frame training ablation (Section~\ref{sec:multiframe_ablation}), and the first mechanistic interpretability study of an FPP network (Section~\ref{sec:interpretability}). Systematic ablations of normalization, background processing, multi-frame composition, loss, and architecture isolate the design choices that govern single-shot performance and establish a best UNet baseline whose 14.54~mm object MAE (18\% of the 80~mm object depth range), with only a 1.9$\times$ spread across four architectures, points to a representational rather than a model-capacity limit.

Three mechanistic probes (linear probing, Grad-CAM, and an in-range flat-plane out-of-distribution test), to our knowledge the first application of mechanistic interpretability to FPP, converge on the diagnosis: the best UNet baseline solves the task via object-boundary shape priors rather than via the phase structure of the fringe pattern. This is the shortcut-learning failure of Geirhos et al.~\citep{geirhos2020shortcut} instantiated in optical metrology, and Proposition~\ref{prop:noninjective} identifies the mechanism: the mapping the network is asked to learn does not uniquely exist without fringe-order information, and an optimizer trained on $\ell_1$/$\ell_2$ regression resolves that non-injectivity by collapsing onto whatever shape-prior surrogate the training distribution supplies. Additional data or larger models will not close the gap, because they do not change the hypothesis space the optimizer searches. Removing the shortcut therefore requires an architectural change that routes reconstruction through wrapped phase and fringe order rather than learning the fringe-to-depth mapping directly.

This paper makes the following contributions. First, we adopt and extend FPP-ML-Bench~\citep{haroon2026fppml}, an open photorealistic synthetic benchmark for single-shot FPP that covers object-diverse self-occluding geometries at long-range standoff (1.5--2.1~m) with a standardized object/background/overall evaluation protocol and explicit reporting conventions for normalization, loss, masking, and training-set composition. Second, we formalize the single-shot ambiguity, proving the fringe-to-depth mapping is non-injective without fringe-order information and showing the induced depth error scales as $Z^2$, which accounts on theoretical grounds for why single-shot FPP is more severely ill-posed at long range; to our knowledge this is the first such analysis of the long-range single-shot regime. Third, we isolate the design choices that govern single-shot FPP performance through systematic ablations of normalization, background processing, multi-frame composition, loss design, and architecture, and find that the residual error of the best baseline reflects a representational rather than a capacity-bound limit. Fourth, we diagnose that representational limit via the first application of mechanistic interpretability to FPP, with three complementary probes (linear probing, Grad-CAM, and an in-range flat-plane out-of-distribution test) converging on the finding that the best UNet baseline solves the task via shape priors rather than fringe-phase decoding. The diagnosis points to an architectural repair that removes the shape-prior solution from the hypothesis space by construction rather than discouraging it with a loss penalty.

Section~\ref{sec:virtual} reviews the FPP pipeline and describes the VIRTUS-FPP framework used for data acquisition. Section~\ref{sec:data} describes the dataset and acquisition protocol. Section~\ref{sec:benchmarking} formalizes the single-shot setting and its theoretical limits (Section~\ref{sec:singleshot_theory}) and establishes the single-shot baseline through systematic ablations of normalization (Section~\ref{sec:normalization_comparison}), background processing (Section~\ref{sec:background_ablation}), multi-frame composition (Section~\ref{sec:multiframe_ablation}), loss design (Section~\ref{sec:loss_comparison}), and architecture (Section~\ref{sec:models}). Section~\ref{sec:interpretability} runs the mechanistic diagnosis on the best baseline. Section~\ref{sec:conclusion} concludes.

\section{Fringe Projection Profilometry}
\label{sec:virtual}

This section reviews the classical FPP pipeline on which the rest of the paper builds (Section~\ref{sec:fpp_principles}) and describes the virtual implementation used for data acquisition (Sections~\ref{sec:system_config}--\ref{sec:virtual_calib}). The virtual implementation is based on the VIRTUS-FPP~\citep{HaroonVIRTUS2025} framework.

\subsection{Principles}
\label{sec:fpp_principles}

FPP recovers per-pixel depth by projecting structured fringe patterns onto a scene and decoding the phase modulation that surface geometry imposes on those patterns~\citep{zhang2016high,geng2011structured}. The classical pipeline decomposes into three deterministic stages: wrapped phase recovery, fringe-order recovery, and triangulation.

\paragraph{Wrapped phase recovery.} A sinusoidal fringe pattern projected through a calibrated projector produces camera intensity images of the form
\begin{equation}\label{eq:prelim_intensity}
I_n(x,y) = I'(x,y) + I''(x,y)\cos\!\left(\phi(x,y) + \delta_n\right),
\end{equation}
where $I'$ is the background intensity, $I''$ is the modulation amplitude, $\phi(x,y) \in [-\pi, \pi)$ is the wrapped phase carrying the depth information, and $\delta_n = 2\pi n / N$ is the known phase shift of the $n$-th of $N$ projected patterns. Acquiring the full $N$-step sequence and demodulating analytically gives the wrapped phase
\begin{equation}\label{eq:prelim_phase}
\phi(x,y) = -\tan^{-1}\!\left(\frac{\sum_{n=1}^{N} I_n \sin\delta_n}{\sum_{n=1}^{N} I_n \cos\delta_n}\right).
\end{equation}
$N$ typically ranges from 4 for fast acquisition to 18 for high noise immunity~\citep{zhang2016high}.

\paragraph{Fringe-order recovery.} The wrapped phase $\phi$ is ambiguous modulo $2\pi$ across multiple fringe periods. The absolute phase
\begin{equation}\label{eq:prelim_absphase}
\Phi(x,y) = \phi(x,y) + 2\pi k(x,y)
\end{equation}
requires the integer fringe order $k(x,y) \in \mathbb{Z}$ at every pixel. Classical FPP recovers $k$ by an auxiliary procedure such as temporal unwrapping with gray-code projection, in which a separate set of binary patterns is projected alongside the fringe sequence and decoded into the fringe index at every pixel~\citep{sansoni1999three}.

\paragraph{Triangulation.} With calibrated camera and projector pinhole models $M^c$ and $M^p$, the absolute phase at each camera pixel maps to a projector coordinate via $u^p = \Phi(x,y)\,T_p / (2\pi)$, where $T_p$ is the projector pixel period. The 3D point is recovered by triangulating the camera ray (from the camera pixel) and the projector ray (from the recovered projector coordinate)~\citep{zhang2010recent}.

Classical FPP achieves sub-millimeter accuracy because every stage of the pipeline is deterministic and well-posed: the $N$-step capture uniquely determines $\phi$, gray-code decoding uniquely determines $k$, and calibrated triangulation maps $\Phi$ to depth. The single-shot setting that motivates this paper replaces both analytical stages with a single learned mapping from one fringe image to depth; Section~\ref{sec:singleshot_theory} formalizes the ambiguity that this collapse introduces. Training and evaluating such a mapping at long range requires paired fringe-depth data at a scale physical acquisition cannot reliably provide, for the reasons given in Section~\ref{sec:intro}; we therefore instantiate the classical pipeline above in a calibrated virtual FPP system, the VIRTUS-FPP framework~\citep{HaroonVIRTUS2025}, and the remainder of this section describes its configuration and calibration.

\subsection{System Configuration}
\label{sec:system_config}

VIRTUS-FPP is built in NVIDIA Isaac Sim, integrating OptiX ray tracing for photorealistic rendering, PhysX for physics, and Universal Scene Description (USD) for 3D composition. The virtual system consists of a calibrated camera-projector pair
(Table~\ref{tab:system_params}). The camera uses Isaac Sim's pinhole
primitive ($960\times960$ resolution, 50~cm focal length), while the
projector is modeled using a rectangular light source
($0.625\text{ m} \times 0.5\text{ m}$, 40~nits) with texture projection.
The projector is positioned 0.1~m below and 0.125~m to the left of the
camera for optimal triangulation geometry.

\begin{table}[pos=tbp]
\caption{Virtual camera and projector system parameters.}
\label{tab:system_params}

\centering
\small
\resizebox{\ifdim\width>\columnwidth\columnwidth\else\width\fi}{!}{%
\begin{tabular}{@{}ll@{}}
\toprule
\textbf{Camera Parameters} & \textbf{Value} \\
\midrule
Focal length & 50 cm \\

Horizontal aperture & 20.9995 cm \\

Vertical aperture & 15.2908 cm \\

Resolution & $960 \times 960$ pixels \\
\midrule
\textbf{Projector Parameters} & \textbf{Value} \\

Intensity & 40 nits \\

Height & 0.625 m \\

Width & 0.5 m \\

Pattern resolution & $912 \times 1140$ pixels \\
\bottomrule
\end{tabular}}
\end{table}

VIRTUS-FPP's key innovation is projector modeling through the inverse
camera model:
\begin{equation}
\begin{bmatrix} X \\ Y \\ Z \end{bmatrix}
= (M_{\mathrm{ext}})^{-1}(M_{\mathrm{int}})^{-1}
\begin{bmatrix} u \\ v \\ 1 \end{bmatrix},
\end{equation}
enabling accurate dimensional correspondence of projected fringe patterns
at any distance without hardware constraints. All objects in FPP-ML-Bench
use consistent matte material properties (roughness=0.95, specular=0.15,
AO-to-diffuse=0.95) representative of typical structured light
scanning.~\citep{ma16155443,mi13101607}

The rendering pipeline uses OptiX path tracing with specific
configurations: disabled sampled direct lighting mode to prevent phase
map artifacts, and disabled shadows for clean fringe patterns. This
physics-based approach captures complex light transport including
multi-bounce illumination, surface reflectivity variations, and ambient
occlusion.

\subsection{Virtual Calibration}
\label{sec:virtual_calib}

VIRTUS-FPP performs complete virtual calibration using procedurally
generated $5\times9$ asymmetric circular boards (10~mm diameter, 20~mm
spacing). The system captures 18 calibration poses yielding 936
calibration images in approximately 5~minutes (10,530~images/hour). The
calibrated system achieves sub-pixel accuracy (stereo reprojection error:
0.055506 pixels, projector error: 0.048609 pixels).~\citep{HaroonVIRTUS2025}
The VIRTUS-FPP simulation setup is illustrated in Fig.~\ref{fig:simfpp}.

\begin{figure}[pos=tbp]
    \centering
    \includegraphics[width=\linewidth]{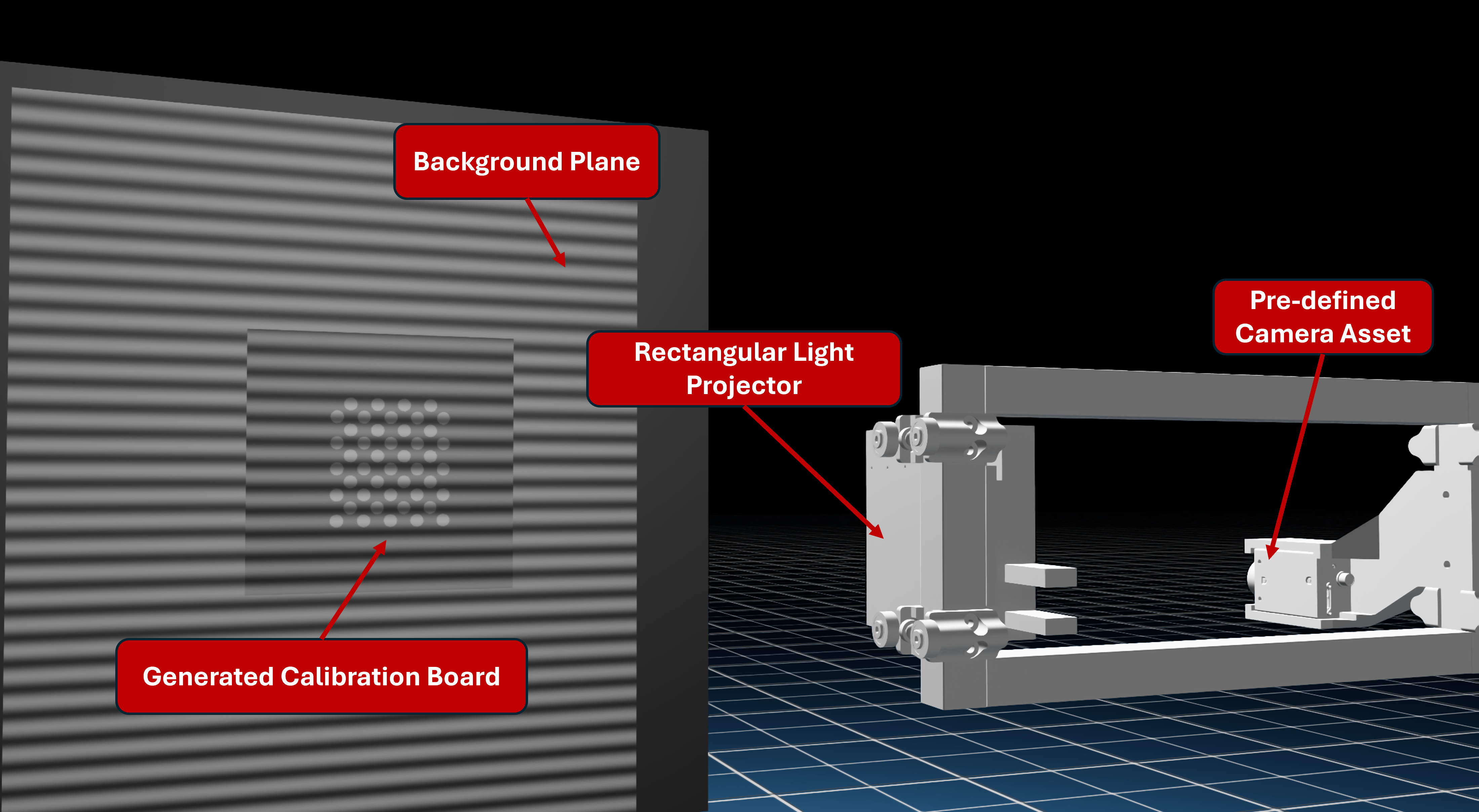}
    \caption{Virtual camera-projector calibration setup with a pinhole
    camera model, rectangular light-source projector, calibration board,
    and matte background plane.}
    \label{fig:simfpp}
\end{figure}

\section{Data Acquisition Methodology}
\label{sec:data}

\subsection{Dataset Composition}

We collected data for 50 USD objects from the YCB
dataset~\citep{calli2017yale} and NVIDIA Physical AI
Warehouse~\citep{Nvidia2025PhysicalAIWarehouse} spanning cylindrical
containers, rectangular boxes, complex shapes (power drills, spray guns),
and industrial components. This diversity evaluates robustness across
varying surface characteristics and morphological complexity, from simple
geometric primitives to intricate shapes with concavities and fine-scale
features.

The system was calibrated and operated within a range of 1.5--2.1~m,
with all objects positioned and scanned at distances within this range.
Objects were placed on a background plane with identical matte properties
to provide consistent lighting conditions and minimize unwanted
reflections. For multi-view acquisition, each object was rotated about
the vertical axis in $60^{\circ}$ increments using the rotation matrix:
\begin{equation}\label{eqn:rotationmatrix}
R_z(\theta_i) = \begin{bmatrix}
\cos\theta_i & -\sin\theta_i & 0 \\
\sin\theta_i & \cos\theta_i & 0 \\
0 & 0 & 1
\end{bmatrix}
\end{equation}
for $\theta_i = i \cdot 60^{\circ}$ where $i = 0,1,\ldots,5$. This
yields 6 viewpoints per object with approximately 50\% overlap between
adjacent views.

\subsection{Fringe Acquisition and Ground Truth Generation}

At each viewpoint, an 18-step phase-shifting sequence
($\delta_n = 2\pi n/18$, $n = 0,\ldots,17$) is captured at
$960\times960$ resolution:~\citep{zhang2016high}
\begin{equation}
I_n(u,v) = I'(u,v) + I''(u,v)\cos\!\left[\phi(u,v) + \frac{2\pi n}{18}\right],
\end{equation}
where $(u,v)$ are pixel coordinates, $I'(u,v)$ is the background
intensity, $I''(u,v)$ is the modulation amplitude, and $\phi(u,v)$ is
the wrapped phase. The GPU-accelerated pipeline achieves approximately
3~fps, over twice the speed of previous approaches.~\citep{zheng2020fringe}

The captured fringe patterns are processed using the standard 18-step
phase-shifting algorithm combined with Gray-code temporal
unwrapping~\citep{sansoni1999three} and triangulation to generate depth
maps $D(u,v)$ stored as MATLAB \texttt{.mat} files containing depth
values in millimeters at each pixel location.

\subsection{Dataset Summary}
\label{sec:dataset_summary}

The dataset comprises 15,600 raw fringe images (50 objects $\times$ 6
viewpoints $\times$ 52 fringe patterns per sequence), 300 corresponding
ground truth depth maps stored as \texttt{.mat} files, and 50 ground
truth mesh geometries. Data are partitioned with an 80/10/10 split at
the object level: 240 training samples (40 objects $\times$ 6
viewpoints), 30 validation samples (5 objects $\times$ 6 viewpoints),
and 30 test samples (5 objects $\times$ 6 viewpoints), ensuring
evaluation on completely unseen object geometries.

For all experiments, we use the first fringe image from each 18-step
sequence as the network input $I(u,v)$, ensuring identical conditions
across all models. To investigate how data representation affects
learning performance, we train models using three different normalization
strategies for the ground truth depth maps:

\begin{enumerate}
    \item \textbf{Raw depth (unnormalized):} The depth values $D(u,v)$
    are used directly in millimeters as stored in the \texttt{.mat}
    files, with values typically ranging from 0~mm (background) to
    1500--2000~mm (object surfaces). This approach requires the network
    to learn absolute metric depth values across a large dynamic range.

    \item \textbf{Global normalized depth:} The raw depth map is
    normalized by a global constant, converting millimeters to
    meters:~\citep{nguyen2020single}
    \begin{equation}
        D_{\mathrm{global}}(u,v) = \frac{D(u,v)}{1000}.
    \end{equation}
    This strategy reduces the numerical range to approximately 0--2~m,
    which may improve numerical stability during training while
    maintaining a consistent scale across all objects.

    \item \textbf{Individual normalized depth:} Each depth map is
    normalized independently to the range $[0,1]$ based on its
    object-specific depth range:
    \begin{equation}
        D_{\mathrm{norm}}(u,v) = \begin{cases}
            \dfrac{D(u,v) - D_{\min}}{D_{\max} - D_{\min}}
            & \text{if } D(u,v) > 0, \\[6pt]
            0 & \text{if } D(u,v) = 0,
        \end{cases}
    \end{equation}
    where $D_{\min} = \min_{u,v}\{D(u,v) \mid D(u,v)>0\}$ and
    $D_{\max} = \max_{u,v}\{D(u,v)\}$ are computed per object. The
    normalization parameters $(D_{\min}, D_{\max})$ are stored
    separately for each training sample to enable metric reconstruction
    during evaluation. This approach normalizes the learning problem to a
    consistent $[0,1]$ range across all training samples, effectively
    decoupling the learning of object \emph{shape} from absolute
    \emph{scale}.
\end{enumerate}

Fig.~\ref{fig:depth_normalizations} shows example depth visualizations
for each normalization strategy. While all three normalizations use the
same underlying \texttt{.mat} depth files for training, we visualize
them by converting to uint16 format via min-max scaling solely for
display purposes. The actual training data remain as floating-point depth
values.

\begin{figure}[pos=tbp]
    \centering
    \includegraphics[width=0.32\linewidth]{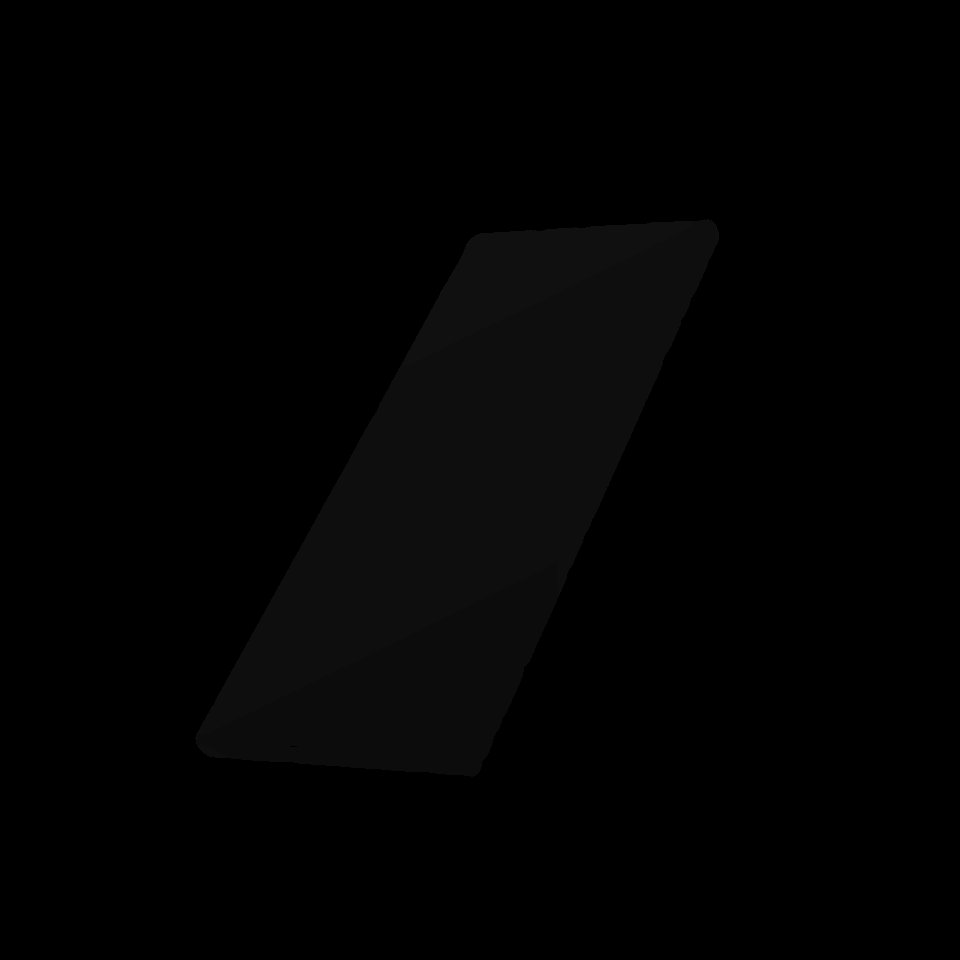}
    \includegraphics[width=0.32\linewidth]{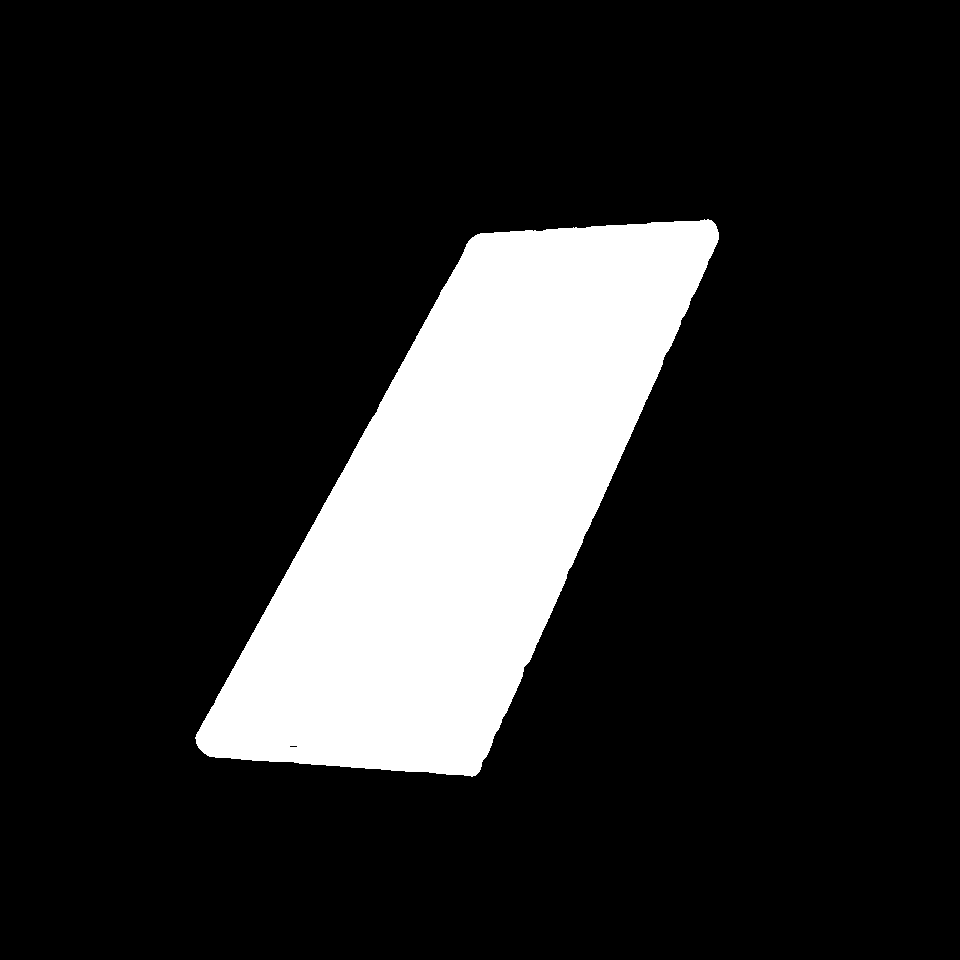}
    \includegraphics[width=0.32\linewidth]{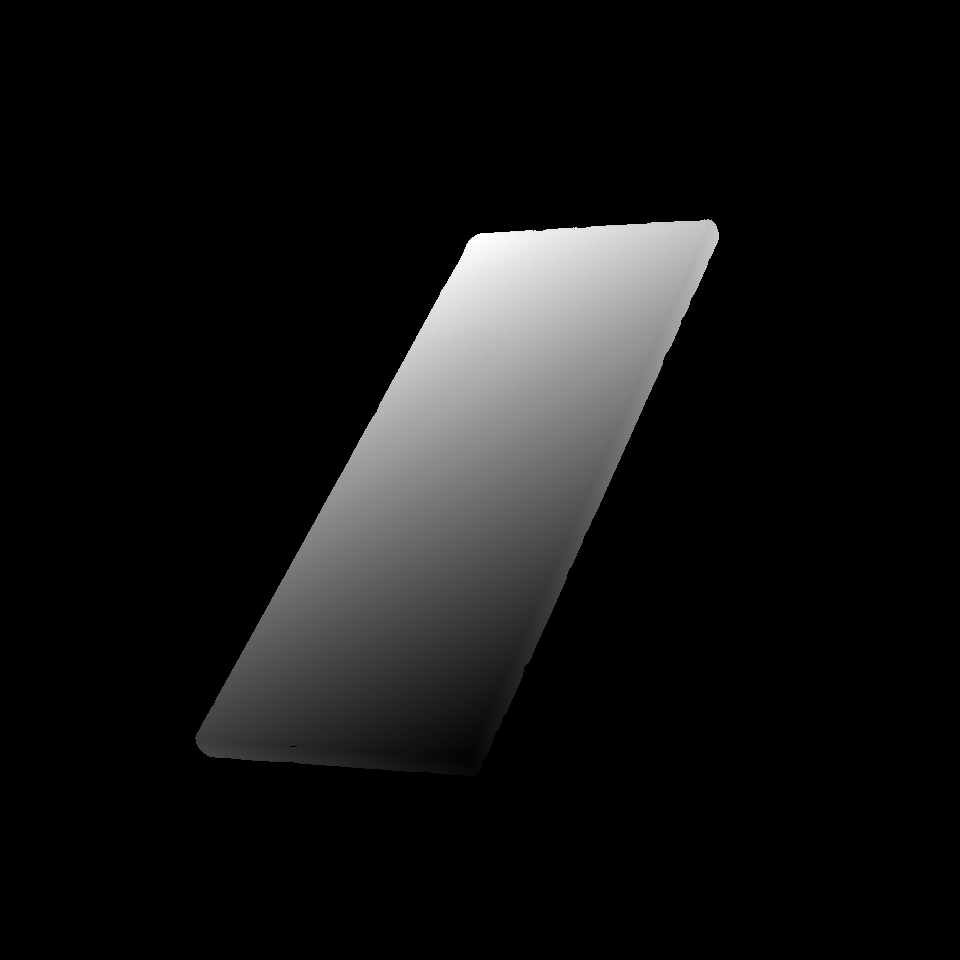}
    \caption{Depth map visualizations for three normalization strategies
    on the same object (wooden boards). From left to right: raw depth
    (0--2026~mm), global normalized depth (0--2.026~m), and individual
    normalized depth ($[0,1]$ range mapped to 1561--2026~mm). All use the
    same underlying depth data, differing only in normalization.}
    \label{fig:depth_normalizations}
\end{figure}

\section{Single-Shot 3D Reconstruction Benchmarking}
\label{sec:benchmarking}

\subsection{Problem Formulation}
\label{sec:problem_formulation}

The classical FPP pipeline reviewed in Section~\ref{sec:fpp_principles} acquires $N$ phase-shifted fringe images to recover the wrapped phase analytically and a separate gray-code sequence to recover the integer fringe order, then triangulates to depth. Single-shot reconstruction collapses this pipeline to a single fringe image $I$ and replaces both analytical stages with a learned mapping $\hat{D}_{\mathrm{norm}} = f_\theta(I)$, where $f_\theta$ is a neural network. We use the first fringe from each 18-step sequence as input, ensuring identical conditions across models.

This collapsed formulation is inherently ill-posed. Single fringe images contain ambiguity in depth estimation owing to the periodic nature of sinusoidal patterns: in the absence of phase-shifting redundancy or auxiliary unwrapping patterns, each $2\pi$ phase range looks identical to every other, and the fringe-order ambiguity that classical gray-code decoding resolves analytically is no longer separable from phase prediction. As a result, learning-based approaches rely on inferring depth from learned shape priors and statistical regularities rather than from fully explicit geometric cues alone. Section~\ref{sec:singleshot_theory} formalizes this ambiguity and shows that the depth error induced by an incorrect fringe-order estimate scales quadratically with the working distance, accounting on theoretical grounds for why single-shot FPP is particularly difficult at long range.

We ablate the five design choices that govern single-shot FPP performance: depth normalization, background processing, multi-frame composition, loss function, and architecture. This isolates the impact of each and establishes best practices for learning-based single-shot FPP.

\subsection{Theoretical Analysis: Single-Shot Ambiguity and Long-Range Sensitivity}
\label{sec:singleshot_theory}

We formalize the ambiguity induced by using a single fringe image for depth reconstruction. Consider a projector-camera system in which a sinusoidal fringe pattern is projected onto a scene and observed by the camera. The captured image intensity at camera pixel $(x,y)$ for the $t$-th phase-shifted pattern can be modeled as
\begin{equation}\label{eq:intensity_model}
I_t(x,y) = A(x,y) + B(x,y)\cos\!\left(\Phi(x,y) + \delta_t\right),
\end{equation}
where $A(x,y)$ is the background illumination, $B(x,y)$ is the fringe modulation, $\delta_t$ is the known phase shift, and $\Phi(x,y)$ is the absolute phase associated with the projector-camera correspondence. The absolute phase decomposes as
\begin{equation}\label{eq:absolute_phase}
\Phi(x,y) = \phi(x,y) + 2\pi k(x,y),
\end{equation}
where $\phi(x,y) \in [-\pi,\pi)$ is the wrapped phase and $k(x,y) \in \mathbb{Z}$ is the fringe order. The wrapped phase describes the local phase within one fringe period, whereas the fringe order identifies the global fringe period to which the observed point belongs.

\begin{proposition}\label{prop:noninjective}
In single-shot sinusoidal fringe projection profilometry, the mapping from a single captured fringe image $I_t(x,y)$ to scene depth $Z(x,y)$ is not uniquely determined unless the fringe order $k(x,y)$ is known or otherwise constrained.
\end{proposition}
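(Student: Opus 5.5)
The plan is to prove non-injectivity directly: I will exhibit two distinct depth maps that produce exactly the same single captured image under the intensity model \eqref{eq:intensity_model}. The key fact is that $\cos(\cdot)$ is $2\pi$-periodic. For any integer field $m(x,y)$,
\[
A + B\cos\bigl(\phi + 2\pi(k+m) + \delta_t\bigr) = A + B\cos\bigl(\phi + 2\pi k + \delta_t\bigr).
\]
So the observation $I_t$ depends on $\Phi$ only through the wrapped phase $\phi$ in \eqref{eq:absolute_phase}. By itself this is only an ambiguity in phase. To turn it into an ambiguity in depth, I will use the triangulation map of Section~\ref{sec:fpp_principles}.

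The argument proceeds in three steps.

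\textbf{Step 1: depth as a function of absolute phase.} Fix a camera pixel $(x,y)$. Its camera ray is fixed by $M^c$. The absolute phase gives the projector coordinate $u^p = \Phi\,T_p/(2\pi)$, which by $M^p$ gives a projector plane. The depth $Z = g_{x,y}(\Phi)$ is then the intersection of that ray with that plane. For a nonzero baseline (here $0.1$~m and $0.125$~m), distinct $u^p$ give distinct planes, and these meet the fixed camera ray at distinct points. Hence $g_{x,y}$ is strictly monotone, and in particular injective, on its valid domain.

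\textbf{Step 2: the ambiguity carries over to depth.} Shifting $k \mapsto k+m$ with $m \neq 0$ changes $\Phi$ by $2\pi m$. By Step 1 it therefore changes $Z$ to $g_{x,y}(\Phi + 2\pi m) \neq Z$. Yet by periodicity the image $I_t$ is unchanged. So there are two scenes, each a physically admissible surface point on the same camera ray, with identical $I_t(x,y)$ and different depths. The map $I_t \mapsto Z$ is therefore not a function of the image alone unless $k$ is supplied or constrained.

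\textbf{Step 3: consistency of $A$ and $B$.} To make the counterexample honest, I will note that $A$ and $B$ can be held fixed between the two scenes. In the Lambertian matte model of Section~\ref{sec:system_config}, the albedo can be chosen to compensate any $1/r^2$ change, or one can simply observe that $A$ and $B$ are unknown nuisance fields not determined by $Z$. Either way the two scenes are indistinguishable from one image.

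The main obstacle is Step 1 together with Step 3: making sure the alternative depth $g_{x,y}(\Phi+2\pi m)$ lies within the measurement volume and that the photometric terms do not leak depth information. I would handle the first by choosing $m=\pm1$. At the 1.5--2.1~m standoff, one fringe period corresponds to a depth shift $\Delta Z \approx Z^2 p/(fb)$, which for typical pitch is well inside the working range. This computation also sets up the subsequent $Z^2$ scaling result. I would handle the second by treating $A$ and $B$ as free per-pixel unknowns, so the statement holds for the model as posed.
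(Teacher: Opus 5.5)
Your proposal is correct and follows essentially the same route as the paper. The $2\pi$-periodicity of the cosine makes fringe orders $k$ and $k+m$ produce identical intensities, and the resulting projector-coordinate shift $\Delta u_p = mP$ gives a different triangulated depth. Your extra steps only make explicit what the paper leaves loose: the ray--plane injectivity argument replaces its claim that different $u_p$ ``generally yield different depths'', and your treatment of $A$, $B$ as fixed nuisance fields is what the paper assumes silently.
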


\begin{proof}
From Eq.~\eqref{eq:intensity_model}, the observed intensity depends on the absolute phase through a cosine function. Since the cosine function is periodic with period $2\pi$, for any integer $m \in \mathbb{Z}$,
\begin{equation}\label{eq:periodicity}
\cos\!\left(\Phi(x,y) + \delta_t\right) = \cos\!\left(\Phi(x,y) + 2\pi m + \delta_t\right).
\end{equation}
Therefore, two absolute phases
\begin{equation}
\Phi_1(x,y) = \phi(x,y) + 2\pi k(x,y)
\end{equation}
and
\begin{equation}
\Phi_2(x,y) = \phi(x,y) + 2\pi\left(k(x,y)+m\right), \qquad m \neq 0,
\end{equation}
produce identical observed intensities:
\begin{equation}\label{eq:same_intensity}
I_t\!\left(\Phi_1(x,y)\right) = I_t\!\left(\Phi_2(x,y)\right).
\end{equation}

However, these two absolute phases correspond to different projector coordinates. If the projected fringe pitch is $P$ pixels, the projector coordinate associated with the absolute phase is
\begin{equation}\label{eq:projector_coordinate}
u_p(x,y) = \frac{P}{2\pi}\,\Phi(x,y).
\end{equation}
Substituting Eq.~\eqref{eq:absolute_phase} into Eq.~\eqref{eq:projector_coordinate} gives
\begin{equation}\label{eq:projector_coordinate_order}
u_p(x,y) = P\!\left(\frac{\phi(x,y)}{2\pi} + k(x,y)\right).
\end{equation}
Thus, changing the fringe order by $m$ changes the projector coordinate by
\begin{equation}\label{eq:projector_error}
\Delta u_p = mP.
\end{equation}
Although the observed intensity is unchanged under this shift, the projector-camera correspondence is not. Since depth reconstruction in FPP is obtained through triangulation from the camera pixel and the corresponding projector coordinate, different values of $u_p$ generally yield different depths. The same single-shot intensity observation is therefore consistent with multiple physically distinct depths. Hence, the mapping
\begin{equation}
I_t(x,y) \mapsto Z(x,y)
\end{equation}
is non-injective in the absence of fringe-order information.
\end{proof}

\subsubsection{Effect of Range on Depth Error}

We next show that an incorrect fringe-order estimate produces increasingly large depth errors as the object distance increases. For clarity, consider a simplified triangulation model analogous to stereo geometry:
\begin{equation}\label{eq:triangulation}
Z = \frac{fb}{d},
\end{equation}
where $Z$ is depth, $f$ is the focal length, $b$ is the projector-camera baseline, and $d$ is the disparity induced by the projector-camera correspondence. A perturbation in correspondence produces a depth perturbation
\begin{equation}
\Delta Z \approx \frac{\partial Z}{\partial d}\,\Delta d.
\end{equation}
Differentiating Eq.~\eqref{eq:triangulation} with respect to $d$ gives
\begin{equation}\label{eq:depth_derivative}
\frac{\partial Z}{\partial d} = -\frac{fb}{d^2}.
\end{equation}
Since $d = fb/Z$, Eq.~\eqref{eq:depth_derivative} can be rewritten as
\begin{equation}\label{eq:depth_error}
\left|\Delta Z\right| \approx \frac{Z^2}{fb}\left|\Delta d\right|.
\end{equation}
A fringe-order error of $m$ periods induces a projector-coordinate error of approximately $mP$ and therefore a correspondence error proportional to $mP$:
\begin{equation}\label{eq:range_error}
\left|\Delta Z\right| \propto \frac{Z^2}{fb}\,|mP|.
\end{equation}
Equation~\eqref{eq:range_error} shows that the depth error caused by an incorrect fringe order grows approximately quadratically with the object distance $Z$. While fringe-order ambiguity exists at all working distances, its effect becomes significantly more severe in long-range FPP, explaining why single-shot fringe-to-depth learning becomes increasingly ill-posed at distances greater than approximately one meter, where a small correspondence ambiguity can lead to a large depth error. To our knowledge, this is the first analysis to characterize how fringe-order ambiguity governs depth error specifically in the long-range single-shot FPP regime, which the existing single-shot literature has not addressed.

\subsubsection{Implication for Learning-Based Single-Shot FPP}

The above result has a direct implication for learning-based single-shot FPP. If a neural network is trained to directly learn
\begin{equation}
f_{\theta}: I_t(x,y) \mapsto Z(x,y),
\end{equation}
then it is asked to approximate a mapping that may not be uniquely defined. In particular, two points may have identical or nearly identical local fringe observations,
\begin{equation}
I_t^{(1)}(x,y) \approx I_t^{(2)}(x,y),
\end{equation}
while corresponding to different depths,
\begin{equation}
Z^{(1)}(x,y) \neq Z^{(2)}(x,y),
\end{equation}
because they belong to different fringe orders. Under standard regression losses such as $\ell_1$ or $\ell_2$ loss, the network may learn an averaged or biased depth estimate rather than the physically correct depth.

The implication is that, for long-range single-shot FPP, the network must be provided with additional information that resolves the phase ambiguity. This can take several forms: explicit supervision of the fringe order $k(x,y)$, recovery of the absolute phase $\Phi(x,y)$, spatial or temporal coding constraints, or an auxiliary phase-unwrapping objective. In any of these formulations, the reconstruction pipeline becomes
\begin{equation}
I_t(x,y) \mapsto \left(\phi(x,y), k(x,y)\right) \mapsto \Phi(x,y) \mapsto Z(x,y),
\end{equation}
rather than directly learning an ill-posed mapping from a single fringe image to depth. An architecture that realizes this pipeline, by making the network's output space wrapped phase and inserting a fixed calibration layer that maps phase to depth deterministically, is the architectural direction this diagnosis points to.

The empirical predictions of this analysis are confirmed in the remainder of the paper. The architectural baselines of Section~\ref{sec:models} all plateau in the regime that Proposition~\ref{prop:noninjective} predicts is irreducibly ill-posed without fringe-order information, with errors that the mechanistic analysis of Section~\ref{sec:interpretability} traces directly to the network's inability to recover phase.

\subsection{Normalization Strategy Comparison}
\label{sec:normalization_comparison}

Having formalized the limit set by single-shot fringe-order ambiguity, we now turn to the empirical question of how well a learning-based mapping can be made to perform within that limit. We begin by isolating data normalization, which we will see is the single most impactful design choice for an end-to-end depth-regression baseline; the remaining subsections then sweep background processing, multi-frame composition, loss design, and architecture in turn.

\subsubsection{Architecture and training configuration}

We use UNet~\citep{Ronneberger2015} as our baseline architecture due to its
widespread adoption in FPP
literature~\citep{feng2021calibration,zheng2020fringe,ikeda2025deep} and
strong performance on dense prediction tasks. The network consists of
four encoder-decoder stages, progressively downsampling from
$960\times960$ to $60\times60$ resolution, with channel depths increasing
from 64 to 1024 at the bottleneck. Each encoder and decoder stage uses
two $3\times3$ convolutions followed by instance normalization and ReLU
activation. Skip connections preserve spatial information across
corresponding encoder-decoder levels.

We train with RMSE loss using the RMSprop optimizer ($\alpha=0.99$,
initial learning rate $10^{-4}$, weight decay $10^{-5}$), batch size 4,
and a ReduceLROnPlateau scheduler (factor=0.5, patience=10 epochs,
minimum learning rate $10^{-6}$). This combination has demonstrated
effectiveness for similar dense regression tasks~\citep{ikeda2025deep} and
provides a stable baseline for comparison. Training continues until the
learning rate reaches its minimum threshold, indicating convergence.

\subsubsection{Evaluation metrics}

Because background pixels (which carry no valid depth and should predict
0~mm) occupy 60--90\% of each image, they dominate aggregate error and
dilute object error, so a single overall metric is misleading. We
therefore report error on three pixel sets: the full image grid, the object set
$\mathcal{M}^{+}=\{(u,v):D(u,v)>0\}$ (using $D_{\mathrm{norm}}(u,v)>0$
under individual normalization), and the background set
$\mathcal{M}^{0}=\{(u,v):D(u,v)=0\}$, with $|\cdot|$ denoting cardinality.
For a pixel set $\mathcal{S}$,
\begin{equation}
    \mathrm{MAE}_{\mathcal{S}} = \frac{1}{|\mathcal{S}|}
    \sum_{(u,v)\in\mathcal{S}}\bigl|\hat{D}-D\bigr|,
    \qquad
    \mathrm{RMSE}_{\mathcal{S}} = \sqrt{\frac{1}{|\mathcal{S}|}
    \sum_{(u,v)\in\mathcal{S}}\bigl(\hat{D}-D\bigr)^{2}},
\end{equation}
and we write the three instantiations with subscripts $\mathrm{ovr}$
(full grid), $\mathrm{obj}$ ($\mathcal{M}^{+}$), and $\mathrm{bg}$
($\mathcal{M}^{0}$). The object metric isolates accuracy on the actual
geometry, the background metric measures suppression, and the overall
metric is the holistic system-level value.

All errors are reported in millimeters after denormalization to enable
direct comparison across normalization strategies. For individual
normalized predictions, we apply the inverse transformation
$\hat{D}(u,v) = \hat{D}_{\mathrm{norm}}(u,v)\cdot(D_{\max}-D_{\min})+D_{\min}$
using the stored normalization parameters.

\subsubsection{Results}

Table~\ref{tab:normalization_results} summarizes the quantitative
results across the 30 test samples for each normalization strategy, and
Fig.~\ref{fig:normalization_predictions} visualizes detailed per-sample
predictions for a representative object (magazine stack) under each
normalization strategy.

\begin{table}[pos=tbp]
\caption{Reconstruction error for three normalization strategies on 30
test samples. All errors reported in millimeters.}
\label{tab:normalization_results}

\centering
\small
\resizebox{\ifdim\width>\columnwidth\columnwidth\else\width\fi}{!}{%
\begin{tabular}{@{}lcccccc@{}}
\toprule

\textbf{Normalization} &
\textbf{Overall} & \textbf{Overall} &
\textbf{Object} & \textbf{Object} &
\textbf{Background} & \textbf{Background} \\

 & \textbf{MAE} & \textbf{RMSE} &
   \textbf{MAE} & \textbf{RMSE} &
   \textbf{MAE} & \textbf{RMSE} \\
\midrule
Raw        & 35.20 &  85.64 & 148.07 & 214.37 & 19.41 & 52.38 \\

Global     & 14.81 &  57.66 &  82.49 & 144.23 &  9.90 & 45.69 \\

Individual & \textbf{2.30} & \textbf{6.80} & \textbf{16.20} & \textbf{21.19} & \textbf{0.92} & \textbf{3.00} \\
\bottomrule
\end{tabular}}
\end{table}

\begin{figure*}[pos=tp]
    \centering
    \includegraphics[width=0.95\linewidth]{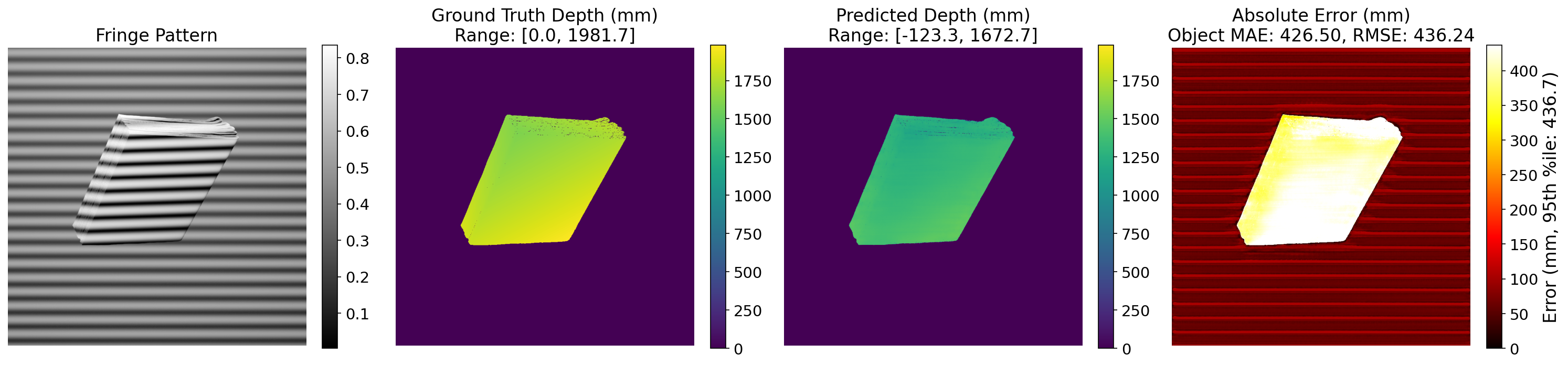}
    \vspace{0.2cm}
    \includegraphics[width=0.95\linewidth]{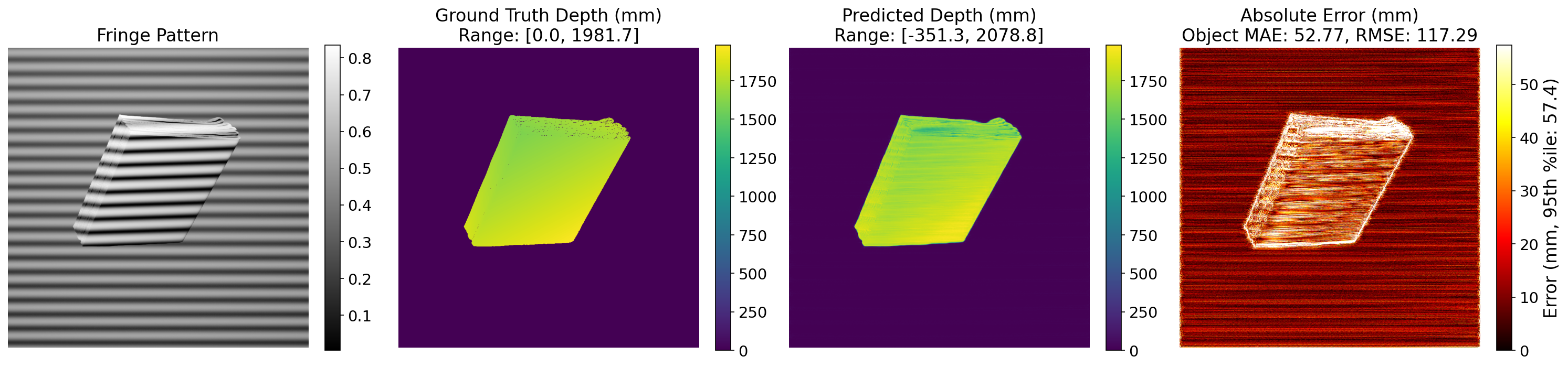}\\
    \vspace{0.2cm}
    \includegraphics[width=0.95\linewidth]{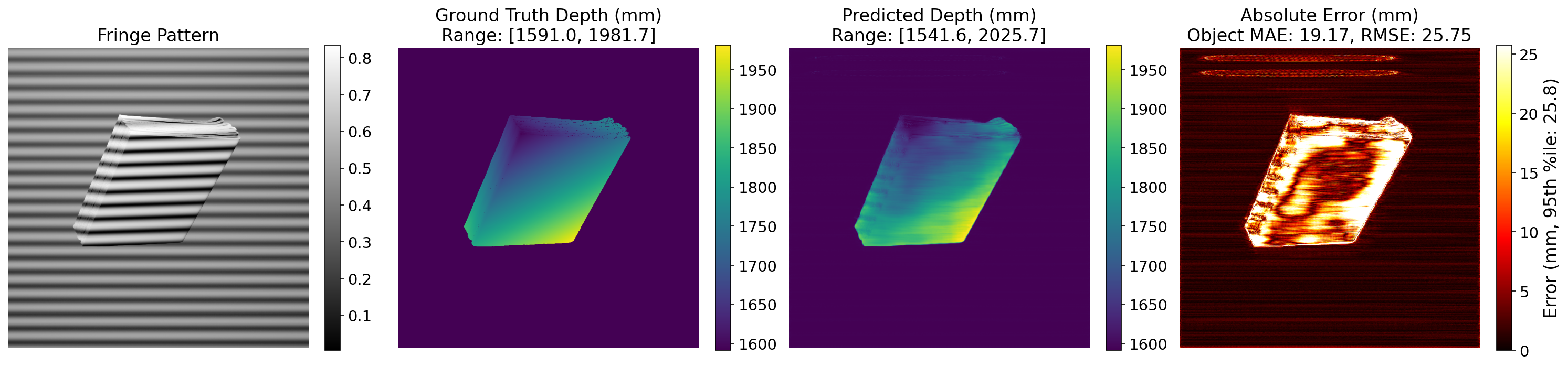}
    \caption{Single-shot depth reconstruction for the magazine stack
    object comparing three normalization strategies. From top to bottom:
    raw depth (426.50~mm object MAE), global normalized depth (52.77~mm
    object MAE), individual normalized depth (19.17~mm object MAE). Each
    row shows, from left to right: input fringe pattern, ground truth
    depth, predicted depth, and absolute error map (clipped at 95th
    percentile for visibility). Individual normalization achieves
    22$\times$ improvement over raw and 2.7$\times$ improvement over
    global.}
    \label{fig:normalization_predictions}
\end{figure*}

Individual normalization substantially outperforms both alternatives
across all metrics. Most notably, the object-only MAE improves from
148.07~mm (raw) to 82.49~mm (global) to just 16.20~mm (individual),
representing a 9.1$\times$ improvement over raw depth and a 5.1$\times$
improvement over global normalization. Background suppression also
improves dramatically, with individual normalization achieving near-zero
background error (0.92~mm MAE, 3.00~mm RMSE) compared to 19.41~mm MAE
for raw depth.

The overall error statistics can be misleading without this
decomposition: raw depth shows 35.20~mm overall MAE despite catastrophic
148.07~mm object error because the numerically larger number of
background pixels (with lower 19.41~mm error) dilutes the overall
statistic. This highlights the importance of reporting object-specific
metrics for FPP evaluation.

The performance ranking directly correlates with optimization difficulty.
Raw depth requires the network to learn absolute metric values across a
0--2000~mm range while simultaneously learning object geometry, creating
a challenging multi-scale optimization problem. Global normalization
improves this by reducing the range to 0--2~m, but different objects
still exhibit different depth ranges in this normalized space, requiring
the network to handle scale variation. Individual normalization
fundamentally simplifies the learning problem by decoupling scale from
shape. In the $[0,1]$ normalized space, the network need only learn the
relative depth structure of each object, with absolute scale provided by
the stored normalization parameters. This consistent range across all
training samples creates a more uniform optimization landscape.
Additionally, background pixels map to 0.0 in normalized space for all
objects, becoming trivial to learn and resulting in near-perfect
background suppression that persists after denormalization.

\subsubsection{Background fringe ablation study}
\label{sec:background_ablation}

\begin{table}[pos=tbp]
\caption{Impact of background fringe removal on reconstruction error
(millimeters). All three normalization strategies show severe performance
degradation when background fringes are removed from input images.}
\label{tab:background_ablation}

\centering
\small
\resizebox{\ifdim\width>\columnwidth\columnwidth\else\width\fi}{!}{%
\begin{tabular}{@{}lcccccc@{}}
\toprule

\textbf{Normalization} &
\textbf{Overall} & \textbf{Overall} &
\textbf{Object} & \textbf{Object} &
\textbf{Background} & \textbf{Background} \\

 & \textbf{MAE} & \textbf{RMSE} &
   \textbf{MAE} & \textbf{RMSE} &
   \textbf{MAE} & \textbf{RMSE} \\
\midrule
Raw        & 112.54 & 189.70 & 437.40 & 511.08 &  91.99 & 178.83 \\

Global     & 115.41 & 191.63 & 598.40 & 723.93 &  92.26 & 151.88 \\

Individual &  15.00 &  21.69 &  45.01 &  57.07 &  12.16 &  15.60 \\
\bottomrule
\end{tabular}}
\end{table}

\begin{figure*}[pos=tp]
    \centering
    \includegraphics[width=0.95\linewidth]{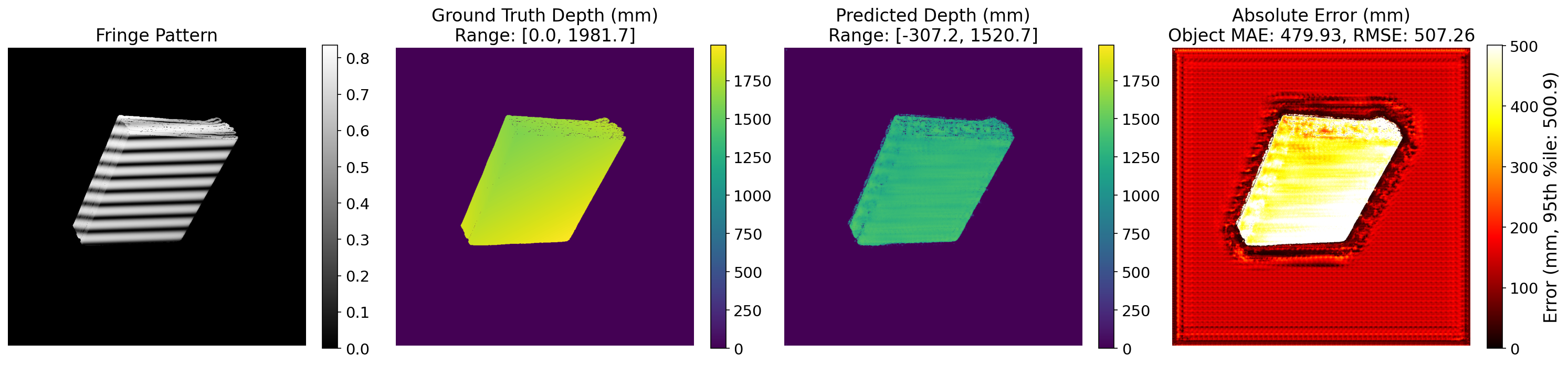}
    \vspace{0.2cm}
    \includegraphics[width=0.95\linewidth]{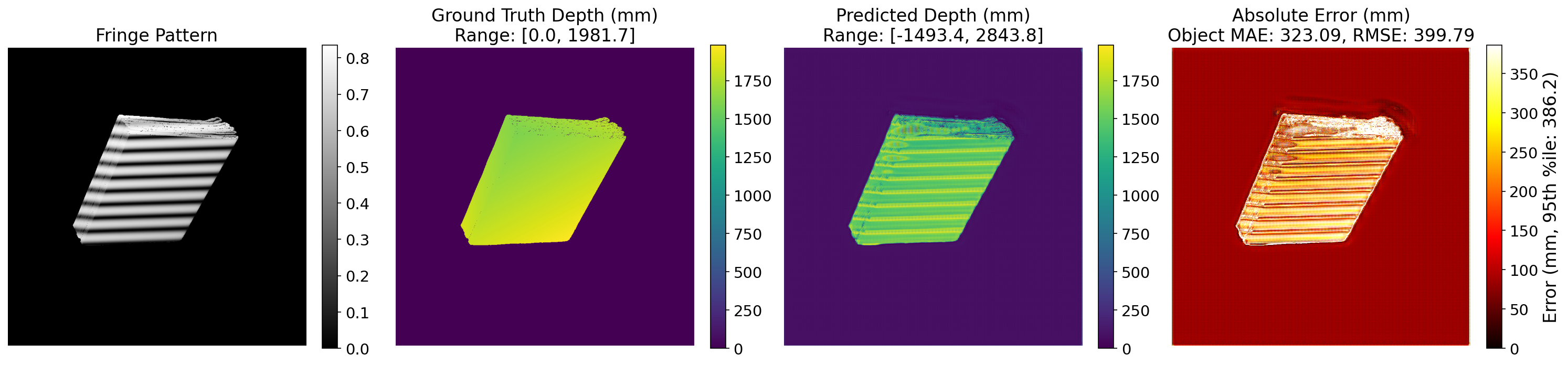}\\
    \vspace{0.2cm}
    \includegraphics[width=0.95\linewidth]{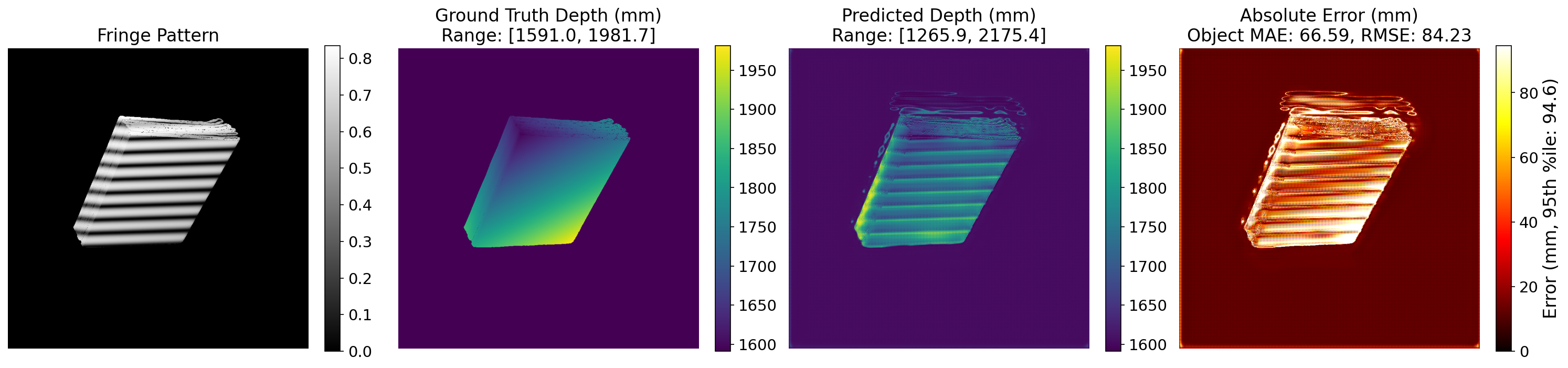}
    \caption{Impact of background fringe removal across three
    normalization strategies for the same magazine stack object. From top
    to bottom: raw depth (479.93~mm object MAE), global normalized depth
    (323.03~mm object MAE), individual normalized depth (66.59~mm object
    MAE). Background removal degrades all strategies: raw 1.1$\times$,
    global 6.1$\times$, individual 3.5$\times$ worse. Note severe
    boundary artifacts and inconsistent depth prediction compared with
    Fig.~\ref{fig:normalization_predictions}.}
    \label{fig:background_ablation_predictions}
\end{figure*}

Visual inspection of prediction error maps reveals horizontal fringe
artifacts in background regions of predicted depth, suggesting the
network attempts to predict depth from background fringe patterns. To
investigate whether these background fringes help or hinder learning, we
conduct an ablation study by removing them from the input.

\noindent\textbf{Experimental setup.} We mask the input fringe images by
setting all background pixels (where $D(u,v)=0$) to zero intensity,
effectively removing the background fringe pattern. We retrain the same
UNet architecture with identical hyperparameters on all three
normalization strategies using these background-removed inputs, then
evaluate on test data with similarly masked inputs.

\noindent\textbf{Results.} Table~\ref{tab:background_ablation} shows
that, surprisingly,
background removal \emph{catastrophically degrades} performance across
all normalization strategies. For individual normalization, object MAE
increases from 16.20~mm to 45.01~mm (2.8$\times$ worse), while
background error increases from 0.92~mm to 12.16~mm (13.2$\times$
worse). Raw and global normalizations suffer even more severe
degradation, with object errors increasing by 3.0$\times$ and
7.3$\times$, respectively. Fig.~\ref{fig:background_ablation_predictions}
shows that predictions with masked inputs exhibit severe boundary
artifacts and depth discontinuities not present with full fringe
patterns.

This counterintuitive result reveals that background fringes provide
essential spatial context rather than acting as distractors. The
continuous fringe pattern across the entire image enables the network to
establish a spatial phase reference that helps resolve the inherent
$2\pi$ ambiguity of single-shot reconstruction. Natural phase
discontinuities at object boundaries provide clearer edge information
than the artificial black-to-object transitions created by masking,
allowing the network to learn where fringe patterns abruptly change phase
as indicators of depth transitions. Additionally, background pixels
provide training signal and implicit regularization, helping the network
understand the overall fringe-to-depth relationship and preventing
overfitting to object-only patterns. This finding has important
implications for FPP learning: background fringes are \emph{signal}, not
noise. All subsequent experiments therefore use full fringe patterns with
background intact.

\subsection{Multi-Frame Training Ablation}
\label{sec:multiframe_ablation}

Having established individual normalization and background retention as
optimal, we investigate whether expanding the training set using
additional fringe frames from each acquisition sequence improves
reconstruction accuracy. In FPP-ML-Bench, each object-viewpoint
combination yields 52 captured images: 18~horizontal sinusoidal fringe
patterns (varying phase), 18~vertical sinusoidal fringe patterns,
7~horizontal gray code patterns, 7~vertical gray code patterns, one
black image, and one white image. All 52~images share the same ground
truth depth map. The baseline experiments above use only the first
horizontal fringe image (frame~0) as input, yielding 240~training
samples. Here, we evaluate whether pairing additional frames with the
same depth targets, effectively treating each frame as an independent
training sample, improves generalization.

\subsubsection{Experimental setup}

We train the same UNet architecture with identical hyperparameters under
four multi-frame conditions:

\begin{enumerate}
    \item \textbf{Horizontal only:} 18 horizontal sinusoidal frames per
    sequence (4{,}320 training / 540 test samples).
    \item \textbf{Vertical only:} 18 vertical sinusoidal frames per
    sequence (4{,}320 training / 540 test samples).
    \item \textbf{All sinusoidal:} All 36 sinusoidal frames (horizontal
    + vertical) per sequence (8{,}640 training / 1{,}080 test samples).
    \item \textbf{All frames:} All 52 frames including gray code, black,
    and white patterns (12{,}480 training / 1{,}560 test samples).
\end{enumerate}

Each condition is evaluated on its corresponding test set, where each
test sample pairs a fringe frame with the same ground truth depth map
used in the baseline.

\subsubsection{Results}

Table~\ref{tab:multiframe_results} presents the quantitative results,
and Figs.~\ref{fig:multiframe_histograms_single}--\ref{fig:multiframe_histograms_mixed}
show the error distributions for each condition.

\begin{table}[pos=tbp]
\caption{Multi-frame training ablation results. The baseline uses a
single horizontal fringe frame per sequence (240~training samples). Each
multi-frame condition expands training by pairing additional fringe
frames with the same ground truth depth map. All errors in millimeters.}
\label{tab:multiframe_results}

\centering
\small
\resizebox{\ifdim\width>\columnwidth\columnwidth\else\width\fi}{!}{%
\begin{tabular}{@{}lcccccccc@{}}
\toprule

\textbf{Training} & \textbf{Train} & \textbf{Test} &
\textbf{Overall} & \textbf{Overall} &
\textbf{Object} & \textbf{Object} &
\textbf{BG} & \textbf{BG} \\

\textbf{Condition} & \textbf{Samples} & \textbf{Samples} &
\textbf{MAE} & \textbf{RMSE} &
\textbf{MAE} & \textbf{RMSE} &
\textbf{MAE} & \textbf{RMSE} \\
\midrule
Baseline (frame 0)        & 240      & 30      & 2.30 & 6.80  & 16.20 & 21.19 & 0.92 & 3.00 \\

Horizontal only            & 4{,}320  & 540     & \textbf{1.40} & \textbf{5.54}  & 12.87 & \textbf{16.89} & 0.21 & \textbf{2.26} \\

Vertical only              & 4{,}320  & 540     & 1.26 & 5.54  & \textbf{12.36} & 16.94 & \textbf{0.16} & 2.29 \\

All sinusoidal             & 8{,}640  & 1{,}080 & 2.70 & 7.20  & 15.33 & 20.17 & 1.45 & 3.87 \\

All frames                 & 12{,}480 & 1{,}560 & 2.14 & 8.35  & 20.32 & 25.86 & 0.12 & 2.45 \\
\bottomrule
\end{tabular}}
\end{table}

\begin{figure*}[pos=tp]
    \centering
    \includegraphics[width=0.9\linewidth]{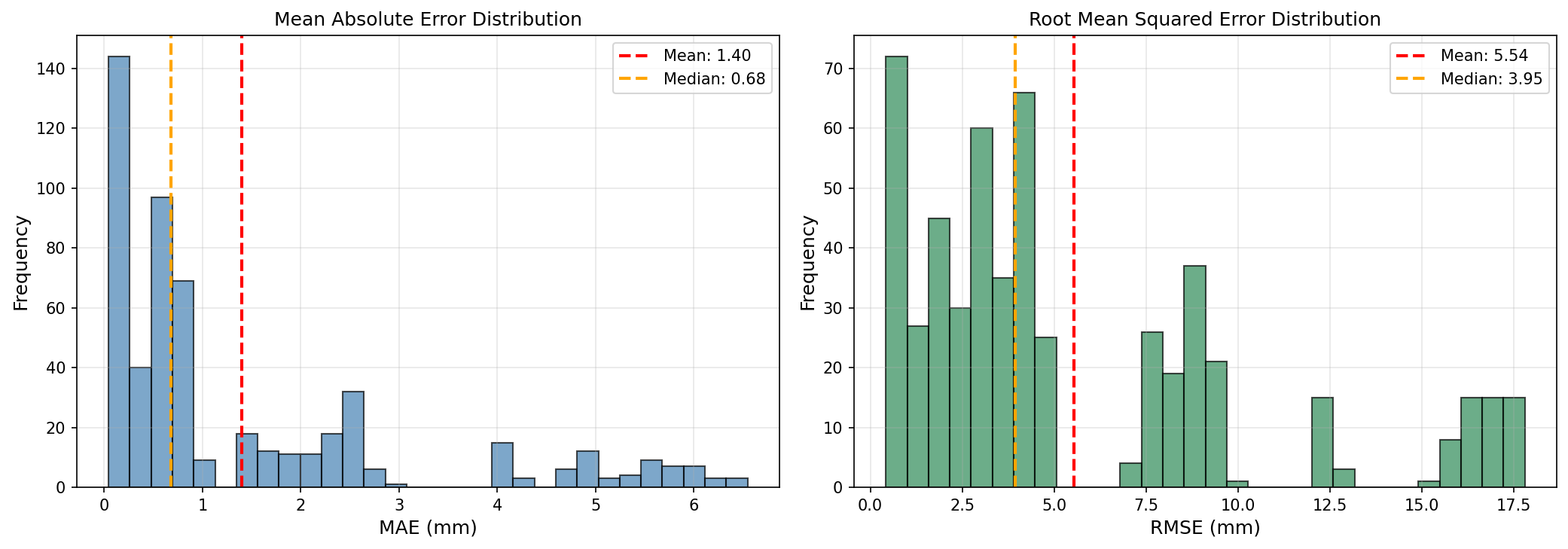}\\
    \vspace{0.3cm}
    \includegraphics[width=0.9\linewidth]{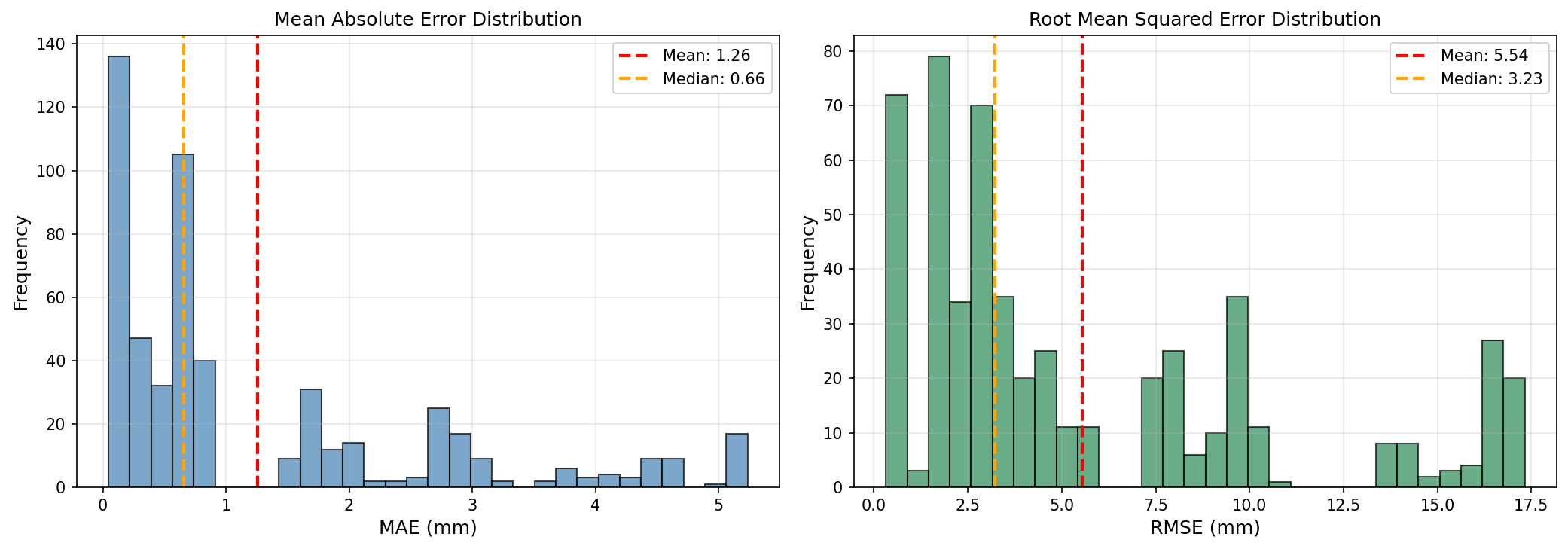}
    \caption{Overall error distributions for single-orientation
    multi-frame training conditions. Top: horizontal only (1.40~mm mean
    MAE). Bottom: vertical only (1.26~mm mean MAE). Both
    single-orientation conditions produce tight, low-error distributions
    comparable to each other.}
    \label{fig:multiframe_histograms_single}
\end{figure*}

\begin{figure*}[pos=tp]
    \centering
    \includegraphics[width=0.9\linewidth]{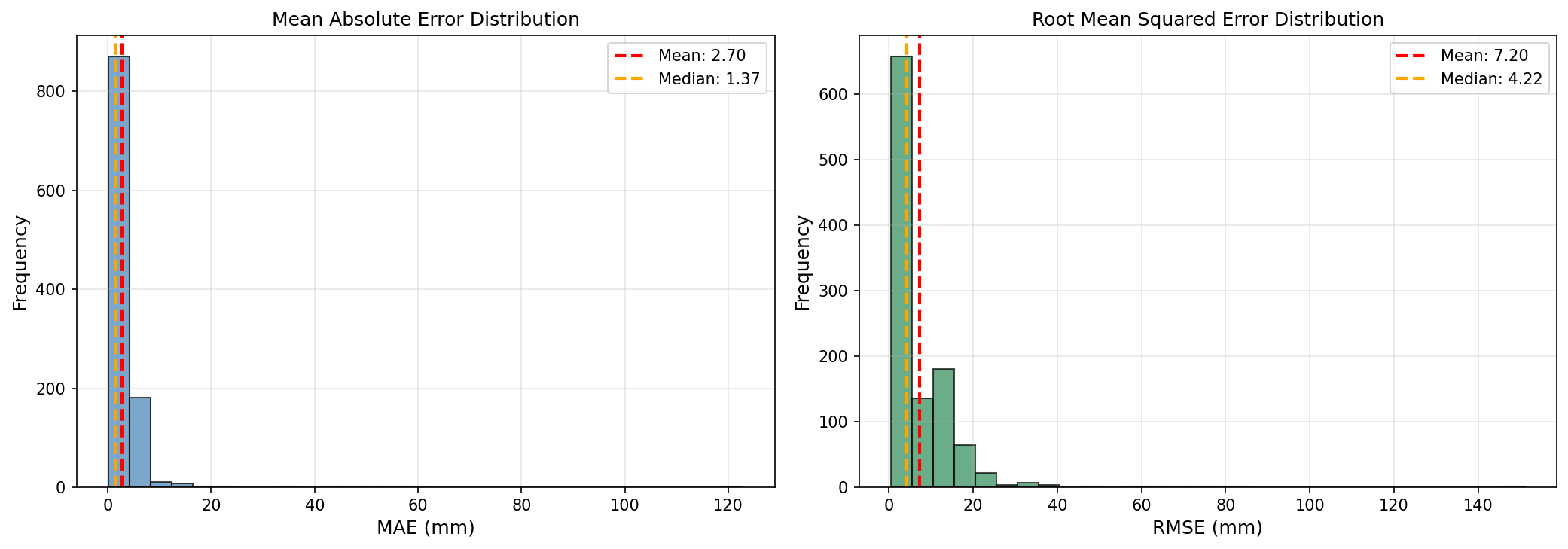}\\
    \vspace{0.3cm}
    \includegraphics[width=0.9\linewidth]{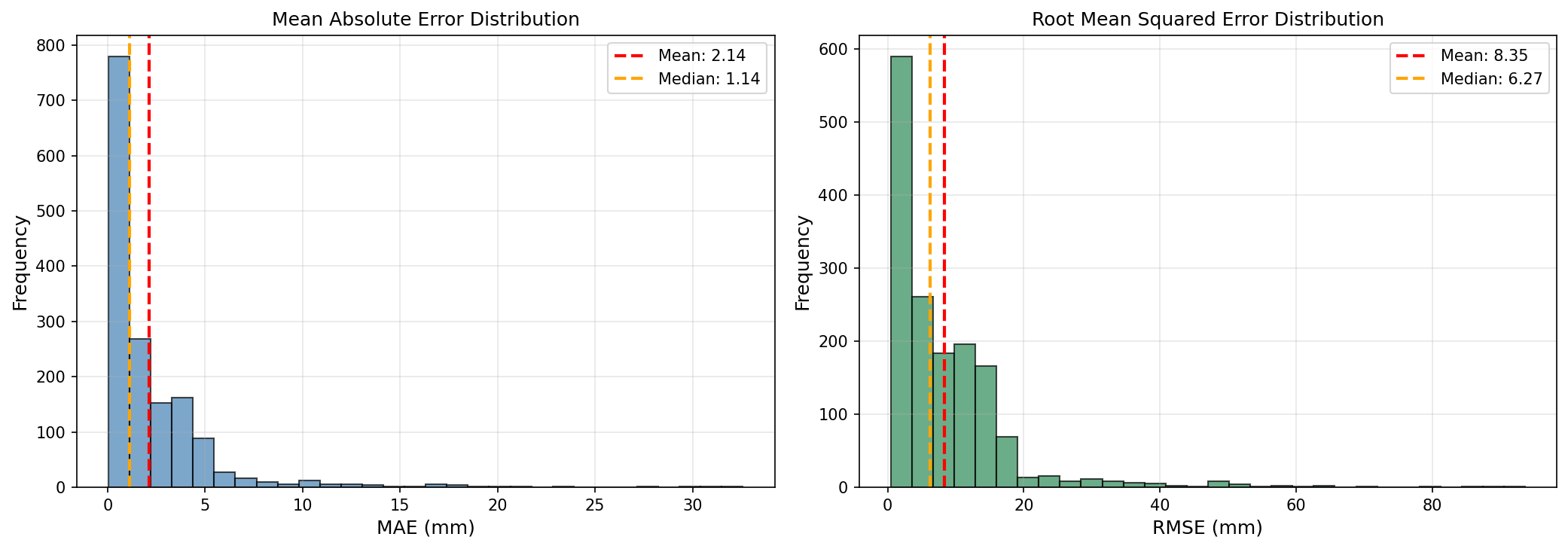}
    \caption{Overall error distributions for mixed multi-frame training
    conditions. Top: all sinusoidal (2.70~mm mean MAE). Bottom: all
    frames (2.14~mm mean MAE). Mixing fringe orientations or including
    non-sinusoidal patterns increases both error magnitude and variance
    compared to single-orientation training
    (Fig.~\ref{fig:multiframe_histograms_single}).}
    \label{fig:multiframe_histograms_mixed}
\end{figure*}

Training on all 18 phase-shifted frames from a single fringe orientation
yields the largest improvements. Horizontal-only training reduces object
MAE by 20.6\% (16.20~mm to 12.87~mm), while vertical-only training
achieves a 23.7\% reduction (16.20~mm to 12.36~mm). Both
single-orientation conditions also dramatically improve background
suppression, reducing background MAE by 77--83\% (0.92~mm to
0.16--0.21~mm). The improvement in background prediction is expected:
with 18~different phase patterns projected onto the background during
training, the network learns that varying fringe intensity on background
regions should always map to zero depth, regardless of phase. This
diversity forces the model to distinguish ``fringe pattern on object''
(which carries depth information) from ``fringe pattern on background''
(which should be suppressed). We note that the multi-frame conditions are
evaluated over larger test sets (540--1{,}560~samples) than the baseline
(30~samples), which reduces per-sample variance and yields more
statistically reliable estimates. The tighter error distributions
visible in Figs.~\ref{fig:multiframe_histograms_single}--\ref{fig:multiframe_histograms_mixed}
compared to the 30-sample baseline partly
reflect this difference in sample size.

Combining horizontal and vertical orientations (all sinusoidal, 8{,}640
samples) performs worse than either orientation alone despite having
$2\times$ more training data. Object MAE increases to 15.33~mm, only
marginally better than the 240-sample baseline. Including all 52~frames
further degrades object accuracy to 20.32~mm, worse than the baseline.
This suggests that the UNet struggles to learn a single mapping that
generalizes across fundamentally different fringe orientations.
Horizontal and vertical fringes encode depth information along
orthogonal axes, producing qualitatively different intensity patterns for
the same geometry. Rather than benefiting from this diversity, the
network appears to compromise between conflicting input-output
relationships, diluting the learned representation.

These results demonstrate that multi-frame training is beneficial only
when the additional frames are \emph{homogeneous} in orientation. Phase
diversity within a single orientation acts as effective data
augmentation, teaching the network phase-invariant depth extraction.
However, orientation diversity introduces conflicting gradients that
degrade performance. This finding has practical implications: when
expanding FPP training datasets, practitioners should prefer additional
phase-shifted frames from the same projection orientation rather than
mixing orientations or including structurally different patterns such as
gray code. For the remaining experiments in this paper, we continue using
the single-frame baseline dataset (240~training samples) to isolate the
effects of loss function and architecture choices independently of
dataset size. Although vertical-only training achieves the lowest
absolute error, the subsequent comparisons evaluate relative performance
between configurations rather than absolute accuracy, and the
conclusions regarding loss function ranking and architectural ordering
are expected to generalize across training data compositions.

\subsection{Loss Function Comparison}
\label{sec:loss_comparison}

Having established individual normalization as the optimal data
representation and confirmed that background fringes should be retained,
we now investigate whether alternative loss functions can further improve
reconstruction accuracy. We evaluate six L1/L2-based loss functions by
training the UNet architecture with individual normalized depth and full
fringe inputs:

\begin{enumerate}
    \item \textbf{RMSE loss (baseline):} Standard root mean squared
    error computed over all pixels:
    \begin{equation}
        \mathcal{L}_{\mathrm{RMSE}} = \sqrt{\frac{1}{HW}
        \sum_{u=1}^{W}\sum_{v=1}^{H}
        \bigl(\hat{D}(u,v)-D(u,v)\bigr)^2 + \epsilon},
    \end{equation}
    where $\epsilon=10^{-8}$ ensures numerical stability.

    \item \textbf{L1 loss:} Mean absolute error computed over all pixels,
    which is less sensitive to outliers than RMSE:
    \begin{equation}
        \mathcal{L}_{\mathrm{L1}} = \frac{1}{HW}
        \sum_{u=1}^{W}\sum_{v=1}^{H}
        \bigl|\hat{D}(u,v)-D(u,v)\bigr|.
    \end{equation}

    \item \textbf{Masked RMSE loss:} RMSE computed only on valid object
    pixels where $D(u,v)>0$. Let $\mathcal{M}^{+}\!=\!\{(u,v)\!:\!D(u,v)>0\}$:
    \begin{equation}
        \mathcal{L}_{\mathrm{MaskedRMSE}} =
        \sqrt{\frac{1}{|\mathcal{M}^{+}|}
        \!\!\sum_{(u,v)\in\mathcal{M}^{+}}\!\!(\hat{D}-D)^{\!2} + \epsilon}.
    \end{equation}

    \item \textbf{Masked L1 loss:} L1 loss computed only on valid
    pixels:
    \begin{equation}
        \mathcal{L}_{\mathrm{MaskedL1}} =
        \frac{1}{|\mathcal{M}^{+}|}
        \!\!\sum_{(u,v)\in\mathcal{M}^{+}}\!\!|\hat{D}-D|.
    \end{equation}

    \item \textbf{Hybrid RMSE loss:} Weighted combination of masked and
    global RMSE:
    \begin{equation}
        \mathcal{L}_{\mathrm{HybridRMSE}} =
        \alpha\cdot\mathcal{L}_{\mathrm{MaskedRMSE}}
        + (1-\alpha)\cdot\mathcal{L}_{\mathrm{RMSE}},
    \end{equation}
    with $\alpha$ as a tunable parameter to emphasize object regions
    while maintaining weak global regularization.

    \item \textbf{Hybrid L1 loss:} Weighted combination of masked and
    global L1:
    \begin{equation}
        \mathcal{L}_{\mathrm{HybridL1}} =
        \alpha\cdot\mathcal{L}_{\mathrm{MaskedL1}}
        + (1-\alpha)\cdot\mathcal{L}_{\mathrm{L1}}.
    \end{equation}
\end{enumerate}

The masked variants are designed to focus training explicitly on object
geometry by excluding background pixels from loss computation, while
hybrid losses balance object-focused learning with global regularization
to prevent pathological solutions such as extreme scale drift.

\subsubsection{Results}

Table~\ref{tab:loss_comparison} summarizes the results across six loss
function families. The results show
three distinct behaviors based on how losses balance object and
background pixels.

\begin{table}[pos=tbp]
\caption{Loss function comparison on individual normalized depth with
full fringe inputs (30 test samples). All errors in millimeters.}
\label{tab:loss_comparison}

\centering
\small
\resizebox{\ifdim\width>\columnwidth\columnwidth\else\width\fi}{!}{%
\begin{tabular}{@{}lcccccc@{}}
\toprule

\textbf{Loss Function} &
\textbf{Overall} & \textbf{Overall} &
\textbf{Object} & \textbf{Object} &
\textbf{Background} & \textbf{Background} \\

 & \textbf{MAE} & \textbf{RMSE} &
   \textbf{MAE} & \textbf{RMSE} &
   \textbf{MAE} & \textbf{RMSE} \\
\midrule
RMSE (Baseline)           &   2.30 &   6.80 & 16.20 & 21.19 & \textbf{0.92} & \textbf{3.00} \\

L1                         &   2.03 &   8.24 & 19.34 & 25.93 & 0.13 & 2.61 \\

Masked RMSE                & 122.38 & 673.08 & 18.83 & 23.02 & 135.67 & 714.82 \\

Masked L1                  & 147.28 & 524.76 & 22.65 & 27.97 & 163.53 & 557.26 \\

Hybrid RMSE ($\alpha$=0.5) &   2.91 &   7.36 & 15.29 & 19.40 & 1.64 & 4.80 \\

Hybrid RMSE ($\alpha$=0.7) &   3.94 &   8.39 & 14.55 & 18.69 & 2.86 & 6.49 \\

Hybrid RMSE ($\alpha$=0.9) &   6.00 &  10.92 & 15.05 & 19.19 & 5.24 & 9.83 \\

Hybrid L1 ($\alpha$=0.5)   &   2.90 &   8.99 & 15.41 & 19.37 & 1.53 & 7.09 \\

Hybrid L1 ($\alpha$=0.7)   & \textbf{3.31} & \textbf{9.85} & \textbf{14.54} & \textbf{17.88} & 2.01 & 8.44 \\

Hybrid L1 ($\alpha$=0.9)   &   4.66 &  12.77 & 14.73 & 18.85 & 3.63 & 11.93 \\
\bottomrule
\end{tabular}}
\end{table}

\begin{figure*}[pos=tp]
    \centering
    \includegraphics[width=0.95\linewidth]{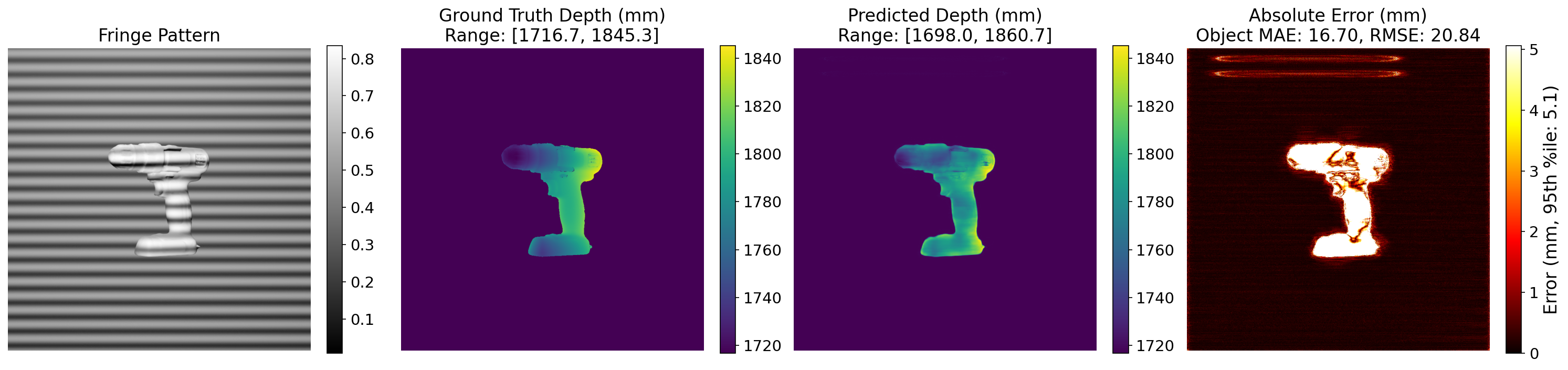}\\
    \vspace{0.2cm}
    \includegraphics[width=0.95\linewidth]{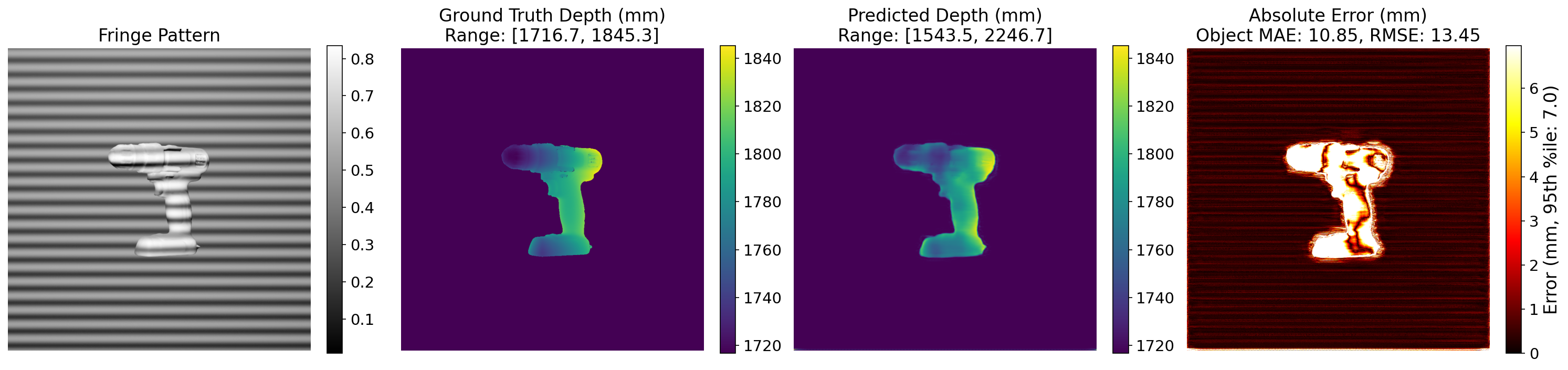}\\
    \vspace{0.2cm}
    \includegraphics[width=0.95\linewidth]{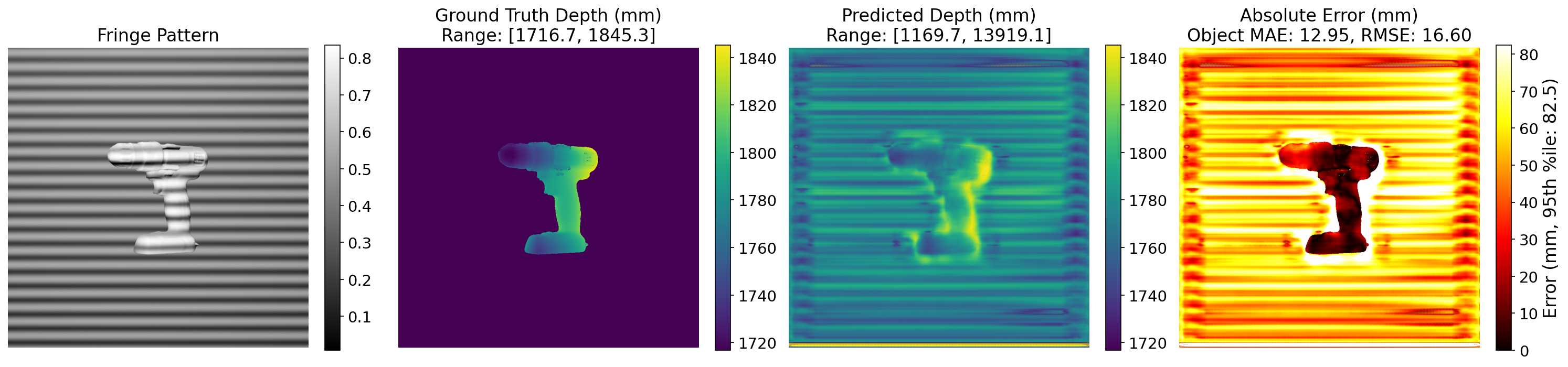}
    \caption{Single-shot depth reconstruction for the power drill object
    under three representative loss functions. Top: RMSE baseline
    (16.70~mm object MAE). Middle: Hybrid L1 with $\alpha=0.7$ (12.21~mm
    object MAE, 27\% improvement, best overall). Bottom: Masked RMSE
    (23.93~mm object MAE) shows scale drift artifacts in background
    despite lower nominal object error.}
    \label{fig:loss_predictions}
\end{figure*}

\noindent\textbf{Standard losses (RMSE, L1).} The baseline RMSE achieves
excellent overall error (2.30~mm MAE, 6.80~mm RMSE) with superior
background suppression (0.92~mm MAE) but moderate object error (16.20~mm
MAE). L1 loss improves background suppression further (0.13~mm MAE) but
increases object error to 19.34~mm (19\% worse), demonstrating the
robustness-accuracy tradeoff inherent to L1 versus L2 objectives.

\noindent\textbf{Masked losses (Masked RMSE, Masked L1).} These losses
catastrophically fail despite theoretically focusing on object geometry.
Masked RMSE produces 122.38~mm overall MAE with extreme background errors
(135.67~mm MAE, 714.82~mm RMSE), indicating severe scale drift.
Fig.~\ref{fig:loss_predictions} (bottom) shows horizontal banding
artifacts characteristic of unconstrained scale. This failure occurs
because masked losses provide no constraint on background predictions,
allowing the network to converge to arbitrary offsets that minimize
masked error at the expense of global consistency. The individual
normalization's per-sample scale compounds this issue.

This failure mode directly parallels the background fringe ablation study
(Sec.~\ref{sec:background_ablation}), where physically removing
background fringes degraded performance 2.8--7.3$\times$. Both failures
stem from the same mechanism: removing spatial context. Background fringe
removal creates artificial black-to-object transitions and eliminates the
continuous phase reference. Masked losses achieve a similar effect
through gradient masking. By providing zero gradients on background
pixels, the network learns to ignore background entirely, allowing
predictions to drift to arbitrary scales that minimize object-only error.
Both experiments demonstrate that background information, whether through
direct fringe patterns or regularization constraints, is essential for
stable depth prediction in single-shot FPP.

\noindent\textbf{Hybrid losses ($\alpha$ ablation).} Hybrid L1 with
$\alpha=0.7$ achieves the best object-only performance (14.54~mm MAE,
17.88~mm RMSE), representing a 10\% improvement over baseline RMSE
(16.20~mm MAE) and a 25\% improvement over L1 (19.34~mm MAE).
Fig.~\ref{fig:loss_predictions} (middle) shows clean predictions without
the scale drift artifacts of masked losses. Increasing $\alpha$ to 0.9
slightly degrades performance (14.73~mm object MAE), while reducing to
0.5 approaches baseline performance (15.41~mm MAE), demonstrating an
inverted-U relationship where moderate object emphasis is optimal. The
weak global regularization term $(1-\alpha)\cdot\mathcal{L}_{\mathrm{global}}$
prevents pathological solutions while the dominant masked term
$\alpha\cdot\mathcal{L}_{\mathrm{masked}}$ focuses training on object
geometry.

The modest improvements from hybrid losses (10--16\% object MAE
reduction) suggest that the baseline RMSE is already near-optimal for
the information available in single fringe images. The periodic nature of
fringe patterns and absence of temporal unwrapping fundamentally limit
achievable accuracy, and loss function engineering provides only marginal
gains. These results establish Hybrid L1 with $\alpha$=0.7 as the
best-performing configuration for subsequent architecture comparison.

\subsection{Architecture Comparison}
\label{sec:models}

Having identified individual normalization and Hybrid L1 loss
($\alpha$=0.7) as the optimal configuration, we now benchmark this setup
across four representative architectures to evaluate whether model design
can overcome the fundamental information limitations of single-shot
reconstruction. The optimizer and other hyperparameters were kept the
same as described in Sec.~\ref{sec:normalization_comparison}.

\noindent\textbf{UNet.}~\citep{Ronneberger2015} Encoder-decoder with skip
connections (described in Sec.~\ref{sec:normalization_comparison}).

\noindent\textbf{TransUNet.}~\citep{chen2021transunet} Identical encoder
and decoder to UNet, but with the convolutional bottleneck replaced by a
Vision Transformer (ViT). At the bottleneck ($60\times60$ spatial
resolution), the feature map is split into $6\times6$ patches (100
tokens), processed by 8 transformer layers with 12-head self-attention
(embedding dimension 768), then reshaped back to spatial dimensions.
This isolates the effect of global self-attention at the bottleneck
while keeping all other components identical to UNet (116.2M parameters
vs.\ 31.0M for UNet).

\noindent\textbf{ResUNet.}~\citep{ikeda2025deep} UNet with residual blocks
replacing convolutional blocks, four levels ($960\times960$ to
$120\times120$), and identity skip connections for improved gradient
flow.

\noindent\textbf{Pix2Pix.}~\citep{Isola2017} Conditional GAN with U-Net
generator and PatchGAN discriminator, adapted from NVIDIA
Pix2Pix-HD.~\citep{Wang2018}

\subsubsection{Results}

Table~\ref{tab:architecture_comparison} summarizes performance across
architectures. UNet achieves the best object-only error (14.54~mm MAE,
17.88~mm RMSE), outperforming TransUNet by 29\%, ResUNet by 60\%, and
Pix2Pix by 91\%. Fig.~\ref{fig:architecture_predictions} shows
representative predictions for a container bottle object.

\begin{table}[pos=tbp]
\caption{Architecture comparison on individual normalized depth with
Hybrid L1 loss ($\alpha$=0.7) across 30 test samples. All errors in
millimeters. UNet achieves best performance, but all models show
substantial object-only errors.}
\label{tab:architecture_comparison}

\centering
\small
\resizebox{\ifdim\width>\columnwidth\columnwidth\else\width\fi}{!}{%
\begin{tabular}{@{}lcccccc@{}}
\toprule

\textbf{Architecture} &
\textbf{Overall} & \textbf{Overall} &
\textbf{Object} & \textbf{Object} &
\textbf{Background} & \textbf{Background} \\

 & \textbf{MAE} & \textbf{RMSE} &
   \textbf{MAE} & \textbf{RMSE} &
   \textbf{MAE} & \textbf{RMSE} \\
\midrule
UNet    & \textbf{3.31} & \textbf{9.85} & \textbf{14.54} & \textbf{17.88} & \textbf{2.01} & \textbf{8.44} \\

TransUNet &  3.56 & 10.81 & 18.81 & 23.83 & 1.82 & 8.01 \\

ResUNet &  3.73 & 13.73 & 23.32 & 28.38 & 1.80 & 11.54 \\

Pix2Pix &  5.48 & 12.20 & 27.73 & 38.22 & 3.16 &  5.15 \\
\bottomrule
\end{tabular}}
\end{table}

\begin{figure*}[pos=tp]
    \centering
    \includegraphics[width=0.95\linewidth]{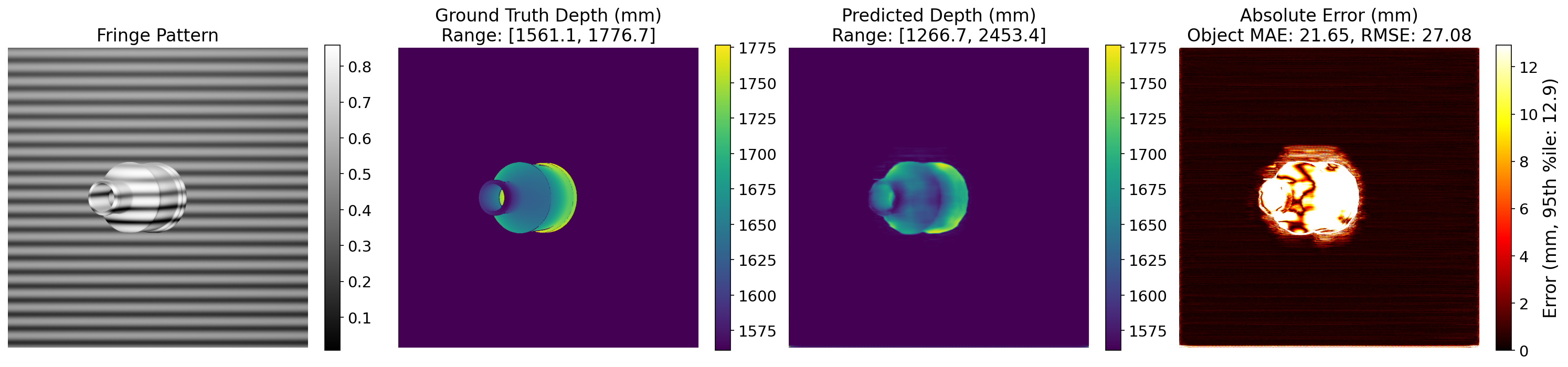}\\
    \vspace{0.2cm}
    \includegraphics[width=0.95\linewidth]{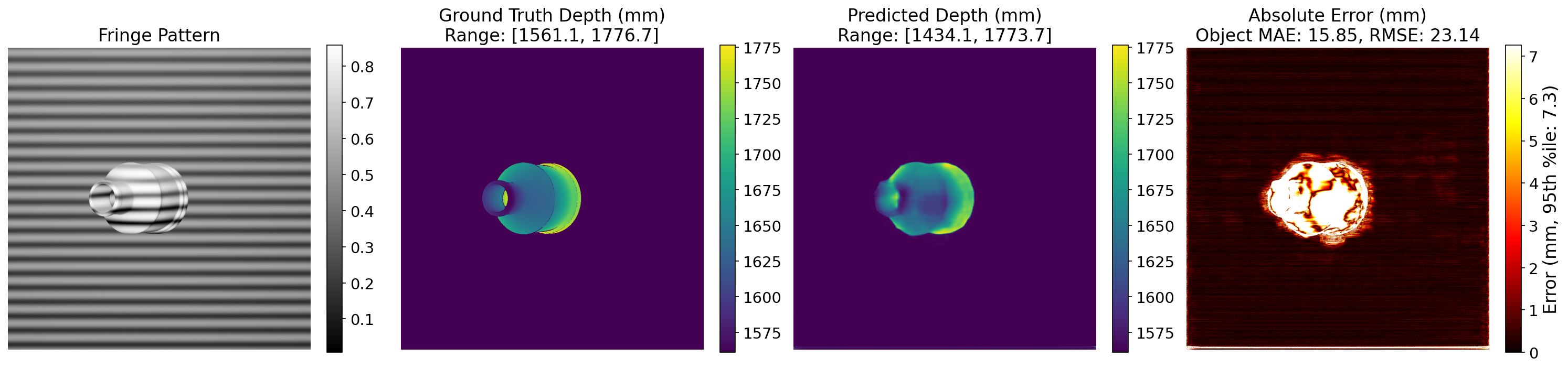}\\
    \vspace{0.2cm}
    \includegraphics[width=0.95\linewidth]{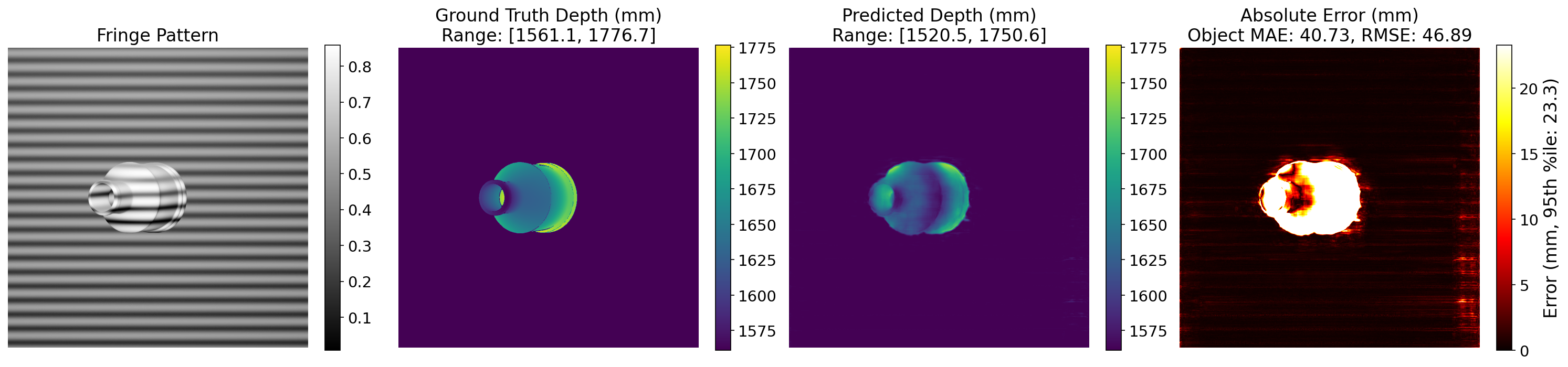}\\
    \vspace{0.2cm}
    \includegraphics[width=0.95\linewidth]{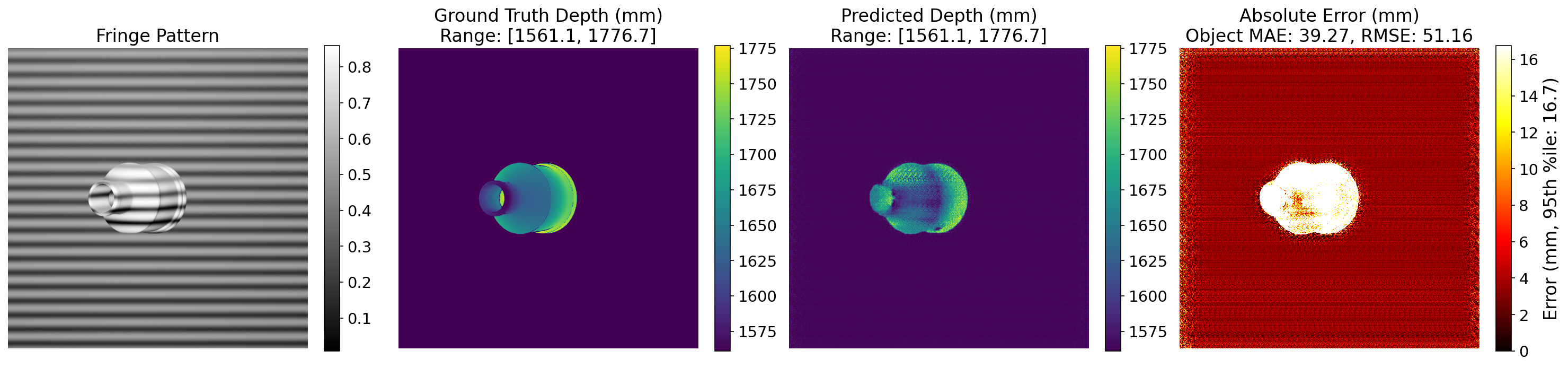}
    \caption{Single-shot depth reconstruction for the container bottle
    object across four architectures. From top to bottom: UNet (21.65~mm
    object MAE, 27.08~mm object RMSE), TransUNet (15.85~mm object MAE,
    23.14~mm object RMSE), ResUNet (40.73~mm object MAE, 46.89~mm object
    RMSE), Pix2Pix (39.27~mm object MAE, 51.16~mm object RMSE, worst
    performance). All models capture coarse geometry but fail on
    fine-scale accuracy. Note severe background artifacts in the Pix2Pix
    prediction, characteristic of adversarial training's focus on
    perceptual quality over metric accuracy.}
    \label{fig:architecture_predictions}
\end{figure*}

UNet's superior performance under the optimized configuration suggests
that the baseline encoder-decoder architecture with skip connections is
well-suited for this task when combined with proper normalization and
loss function design. The consistent $[0,1]$ normalized space and hybrid
loss allow UNet's simple architecture to focus learning on shape
reconstruction without the added complexity of scale variation or
sophisticated architectural components.

TransUNet provides a controlled test of whether global self-attention
improves depth prediction: since it shares the same encoder, decoder,
and skip connections as UNet, the only difference is the transformer
bottleneck. Despite having 3.7$\times$ more parameters (116.2M vs.\
31.0M), TransUNet achieves 29\% worse object MAE than UNet (18.81~mm
vs.\ 14.54~mm). This indicates that the convolutional bottleneck's
local inductive biases are better suited to this task than global
self-attention, particularly given the limited 240-sample training set.
ResUNet performs similarly to TransUNet (23.32~mm object MAE), suggesting
that neither residual connections nor transformer attention overcome the
fundamental information deficit of single-shot reconstruction. We note
that this controlled architectural comparison replaces the specific
method comparison reported in the conference version of this
benchmark~\citep{haroon2026fppml}, which contrasted UNet against Hformer;
substituting a controlled transformer-bottleneck variant (TransUNet) for
a distinct published method isolates the effect of the architectural
change itself and accordingly yields a different relative gap (29\% here
versus the 52\% reported there against Hformer).

Pix2Pix performs worst (27.73~mm object MAE, 38.22~mm object RMSE),
revealing a fundamental misalignment between adversarial training and
metrological objectives. Fig.~\ref{fig:architecture_predictions} shows a
counterintuitive result: Pix2Pix produces visually compelling predictions
with smooth surfaces and statistically correct depth distributions. The
container bottle prediction exhibits a depth range [1561.1, 1776.7]~mm
that exactly matches ground truth, yet achieves nearly 2$\times$ worse
error than UNet (14.54~mm MAE). This occurs because the adversarial
discriminator optimizes for perceptual plausibility (``does this
\emph{look} like a depth map?'') rather than point-wise accuracy. The
network learns to match statistical properties (min, max, mean depth) and
produce realistic surface geometries, but introduces smooth systematic
offsets of 20--40~mm that are visually imperceptible yet catastrophic for
precision measurement. These offsets appear as uniform color shifts in
visualizations rather than high-frequency noise, making predictions
appear deceptively accurate to human observers while failing metrological
evaluation. The severe background artifacts further confirm this: the
network learned that depth maps should have varying depth values, without
the critical constraint that background regions must predict exactly zero
depth.

Critically, even the best architecture (UNet at 14.54~mm MAE) remains
far from the sub-millimeter accuracy of traditional FPP. For objects with
an 80~mm depth range in FPP-ML-Bench, this represents 18\% relative error.
The modest 1.9$\times$ performance gap between best (UNet: 14.54~mm) and
worst (Pix2Pix: 27.73~mm) architectures, combined with all models'
failure to achieve precision depth reconstruction, confirms our
hypothesis: the fundamental limitation is information deficit, not model
design. Single fringe images lack sufficient information for accurate
depth recovery, and no amount of architectural engineering can overcome
this constraint. This is precisely the regime Proposition~\ref{prop:noninjective} of Section~\ref{sec:singleshot_theory} predicts is irreducibly ill-posed: every architecture in the comparison is approximating a mapping that the underlying physics does not uniquely determine.

This finding parallels the loss function results
(Sec.~\ref{sec:loss_comparison}), where even optimal loss design provided
only 10\% improvement over baseline. Together, these results demonstrate
that single-shot fringe-to-depth mapping without explicit phase
information is fundamentally limited. Networks learn coarse shape priors
and statistical regularities rather than accurate geometry, regardless of
architectural sophistication.

\section{Mechanistic Interpretability Analysis}
\label{sec:interpretability}

The previous sections established that even the best-performing model (UNet with Hybrid L1 loss) achieves only 14.54~mm object MAE, representing 18\% of the object depth range. To understand \emph{why} this performance gap exists, we employ mechanistic interpretability, a set of techniques from machine learning research that aim to understand \emph{how} neural networks arrive at their predictions by examining their internal representations and computational mechanisms.~\citep{olah2020zoom} To our knowledge, this is the first application of mechanistic interpretability to FPP.

Unlike standard evaluation metrics that only assess prediction accuracy, mechanistic interpretability provides insight into the features and strategies a network has learned. This distinction is critical: a network may achieve reasonable accuracy through shortcuts or heuristics that differ fundamentally from the intended solution. In our context, a network trained to predict depth from fringe patterns could potentially learn two very different strategies: (1)~\textbf{physics-based decoding}, where the network extracts phase information from fringe intensity variations and converts it to depth following the mathematical relationship governing FPP; or (2)~\textbf{geometric shortcut learning}, where the network recognizes object shapes from the training data and predicts depth by filling in learned shape templates, bypassing the fringe pattern entirely.

We hypothesize that our networks learn the geometric shortcut rather than the physics-based solution. To test this, we employ three complementary interpretability techniques: linear probing to examine what information is encoded in intermediate representations, gradient-weighted class activation mapping (GradCAM) to visualize spatial attention patterns, and out-of-distribution testing with a featureless flat plane to expose reliance on geometric cues.

\subsection{Linear Probing Analysis}

Linear probing is a widely-used interpretability technique that determines what information is accessible in a network's internal representations.~\citep{alain2016understanding} The approach trains small auxiliary ``probe'' networks to predict specific target quantities from frozen intermediate activations. If a probe can accurately predict a target, that information must be explicitly encoded (or easily computable from) the activations; if the probe fails, the information is either absent or encoded in a form that is difficult to extract.

Crucially, linear probing reveals what information the network \emph{represents}, not merely what it receives as input. A network processing fringe images necessarily receives intensity patterns as input, but this does not mean those patterns are preserved or utilized in deeper layers. By probing intermediate activations, we can determine whether the network maintains fringe pattern information for depth computation or discards it in favor of geometric features.

We train spatial probes to predict two types of targets from UNet activations: (1)~\textbf{geometric features} (edge maps computed via Sobel filtering of the ground truth depth), representing structural boundary information; and (2)~\textbf{depth values}, representing the physical quantity the network should predict.

For each of the nine UNet layers (four encoder, bottleneck, four decoder), we train a probe network consisting of four convolutional layers to predict either edges or depth from frozen activations. Probe performance is measured by validation MSE loss, where lower loss indicates the target is more easily decoded from the activations.

Table~\ref{tab:probing_results} and Fig.~\ref{fig:probing_analysis} show that geometric features (edges) are consistently easier to decode than depth values across all layers. The average probe loss for edges (0.000379) is 2.82$\times$ lower than for depth (0.001069), indicating that the network's internal representations encode geometric structure more explicitly than depth values. This asymmetry is largest in the encoder and narrows toward parity in the decoder (dec2, dec3), where depth is most decodable and where the network produces its output, indicating that the network consolidates a shape-based depth representation in the decoder rather than refining depth from fringe phase.

\begin{table}[pos=tbp]
\caption{Linear probing validation MSE loss for geometry (edges) versus depth prediction across UNet layers. Lower loss indicates information is more easily decoded. Edges are consistently easier to predict than depth values.}
\label{tab:probing_results}

\centering
\small
\resizebox{\ifdim\width>\columnwidth\columnwidth\else\width\fi}{!}{%
\begin{tabular}{@{}lccc@{}}
\toprule
\textbf{Layer} & \textbf{Edges Loss} & \textbf{Depth Loss} & \textbf{Ratio (Depth/Edges)} \\
\midrule
enc1 (skip1)    & 0.000387 & 0.002284 & 5.90 \\

enc2 (skip2)    & 0.000343 & 0.001765 & 5.15 \\

enc3 (skip3)    & 0.000336 & 0.001391 & 4.14 \\

enc4 (skip4)    & 0.000399 & 0.000888 & 2.23 \\

bottleneck      & 0.000468 & 0.001099 & 2.35 \\

dec1 (up1)      & 0.000439 & 0.000882 & 2.01 \\

dec2 (up2)      & 0.000375 & 0.000411 & 1.10 \\

dec3 (up3)      & 0.000308 & 0.000371 & 1.20 \\

dec4 (up4)      & 0.000355 & 0.000527 & 1.48 \\

\textbf{Average} & \textbf{0.000379} & \textbf{0.001069} & \textbf{2.82} \\
\bottomrule
\end{tabular}}
\end{table}

\begin{figure}[pos=tbp]
    \centering
    \includegraphics[width=0.95\linewidth]{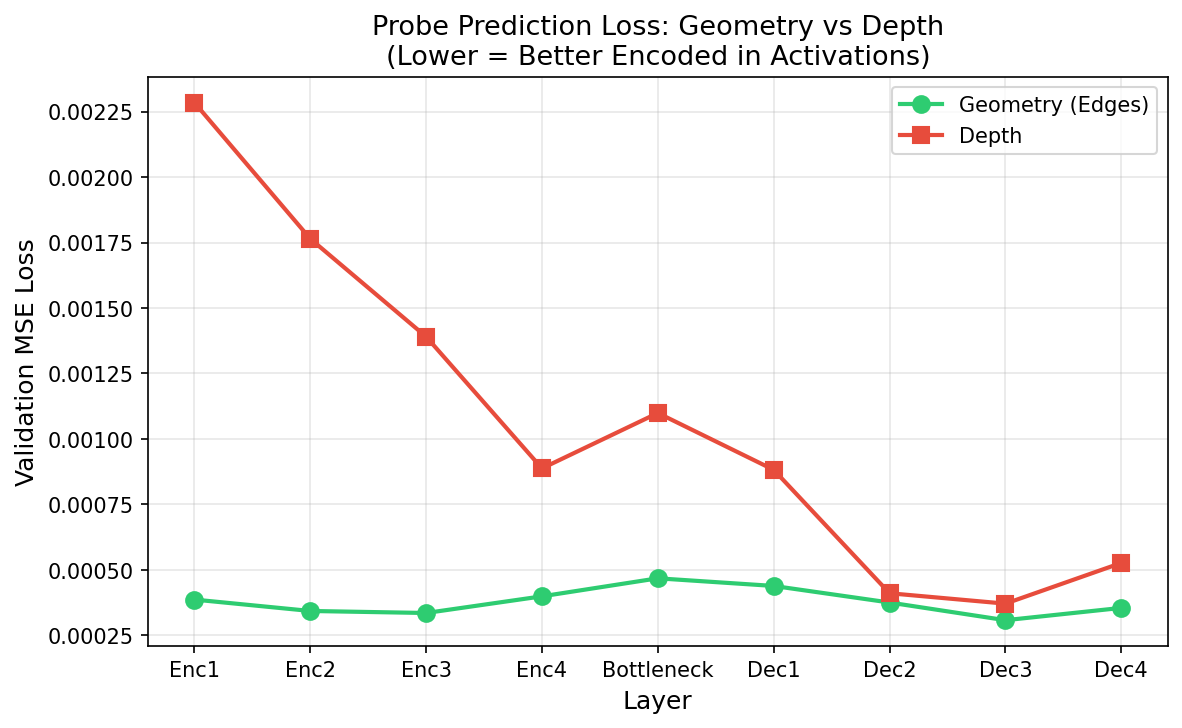}
    \caption{Linear probing analysis comparing how easily geometric (edges) versus physical (depth) information can be decoded from UNet activations at each layer. Edges (green) are consistently easier to predict than depth values (red), with an average ratio of 2.8$\times$ across all layers.}
    \label{fig:probing_analysis}
\end{figure}

\subsection{GradCAM Attention Analysis}

While linear probing reveals what information is \emph{encoded} in network activations, it does not directly show what information the network \emph{uses} for prediction. To address this, we employ Gradient-weighted Class Activation Mapping (GradCAM),~\citep{selvaraju2017grad} a technique that visualizes which spatial regions of the input most strongly influence the network's output.

GradCAM works by computing gradients of the output with respect to intermediate feature maps, then weighting each feature channel by its average gradient magnitude. The resulting heatmap highlights regions where small changes would most affect the prediction, effectively showing where the network ``pays attention'' when making decisions. Originally developed for classification tasks, GradCAM can be adapted for regression by computing gradients with respect to the mean predicted value.

For our analysis, we compute GradCAM heatmaps at multiple layers throughout the UNet and measure their correlation with two reference maps: (1)~\textbf{edge maps} derived from ground truth depth via Sobel filtering, representing geometric boundaries; and (2)~\textbf{fringe intensity variation maps} computed as the local standard deviation of fringe intensity, capturing where fringe patterns carry the most information.

The logic of this comparison is straightforward: if the network learned physics-based fringe decoding, its attention should correlate with fringe patterns, the input signal that encodes depth information. Conversely, if the network relies on geometric shortcuts, its attention should correlate with edges, the boundaries that define object shapes but do not directly encode depth in the physics of FPP.

\begin{table}[pos=tbp]
\caption{GradCAM correlation analysis comparing attention to edges versus fringe patterns across UNet layers (30 test samples). Ratio $>1$ indicates stronger attention to geometric boundaries than fringe patterns.}
\label{tab:gradcam_results}

\centering
\small
\resizebox{\ifdim\width>\columnwidth\columnwidth\else\width\fi}{!}{%
\begin{tabular}{@{}lccc@{}}
\toprule
\textbf{Layer} & \textbf{r(CAM, Edges)} & \textbf{r(CAM, Fringes)} & \textbf{Ratio} \\
\midrule
enc3    & 0.011 & $-$0.010 & 1.13 \\

enc4    & 0.242 &    0.183 & 1.33 \\

bottleneck & 0.144 & 0.130 & 1.10 \\

dec1    & 0.068 &    0.071 & 0.96 \\

dec3    & $-$0.007 & 0.007 & $-$0.72 \\

dec4    & 0.361 &    0.261 & 1.38 \\

\textbf{Average} & \textbf{0.136} & \textbf{0.107} & \textbf{1.28} \\
\bottomrule
\end{tabular}}
\end{table}

\begin{figure*}[pos=tp]
    \centering
    \includegraphics[width=0.95\linewidth]{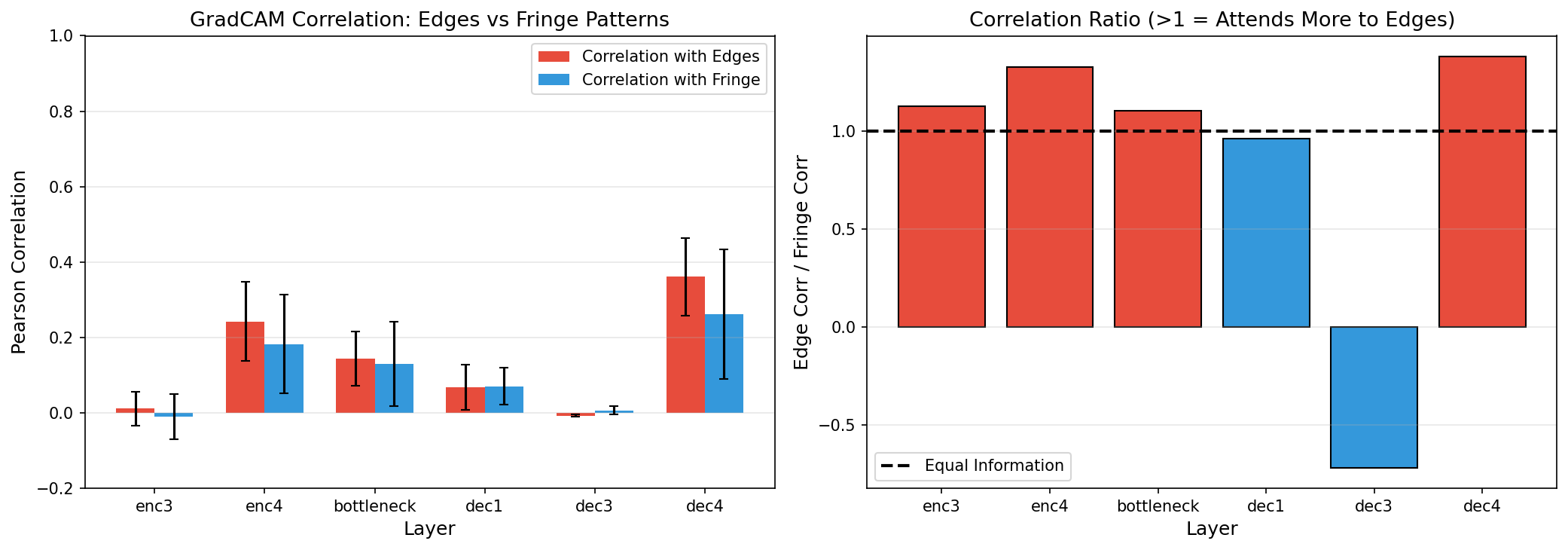}
    \caption{GradCAM attention analysis showing correlation of GradCAM heatmaps with edge maps versus fringe patterns across layers (30 test samples). The network attends more strongly to geometric boundaries than fringe intensity patterns, with an average edge/fringe correlation ratio of 1.28.}
    \label{fig:gradcam_analysis}
\end{figure*}

\begin{figure*}[pos=tp]
    \centering
    \begin{subfigure}[t]{0.13\linewidth}
        \centering
        \includegraphics[width=\linewidth]{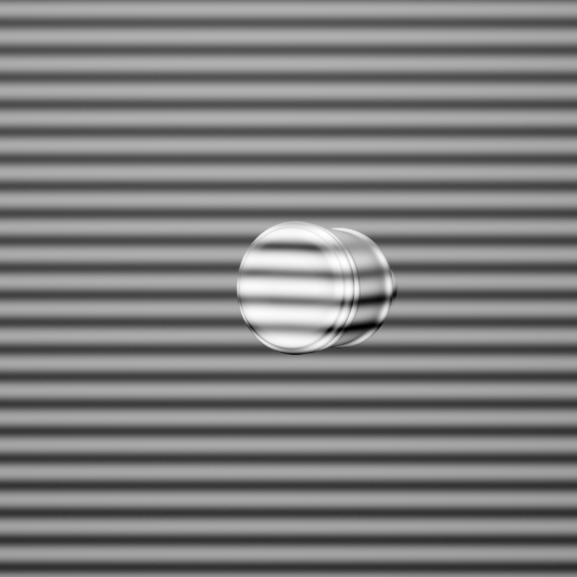}
        \caption{Input}
    \end{subfigure}\hfill
    \begin{subfigure}[t]{0.13\linewidth}
        \centering
        \includegraphics[width=\linewidth]{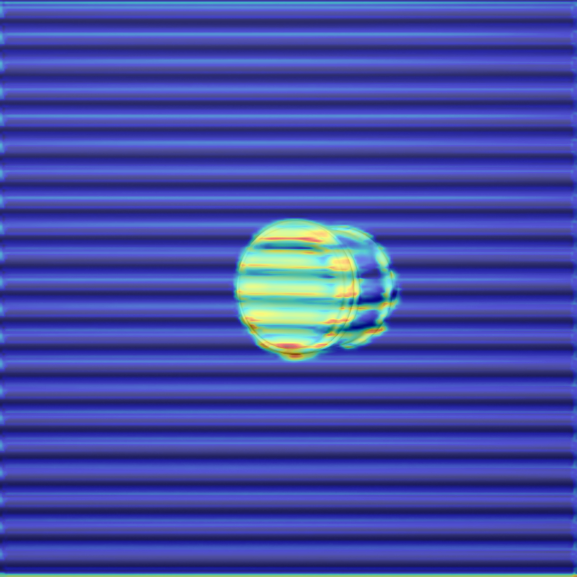}
        \caption{enc3}
    \end{subfigure}\hfill
    \begin{subfigure}[t]{0.13\linewidth}
        \centering
        \includegraphics[width=\linewidth]{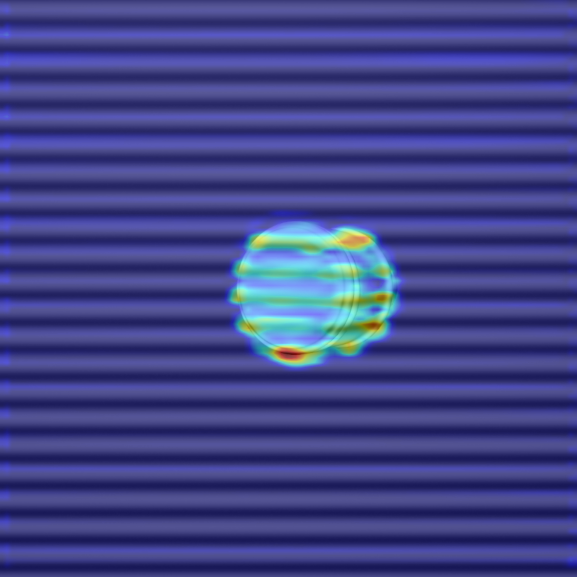}
        \caption{enc4}
    \end{subfigure}\hfill
    \begin{subfigure}[t]{0.13\linewidth}
        \centering
        \includegraphics[width=\linewidth]{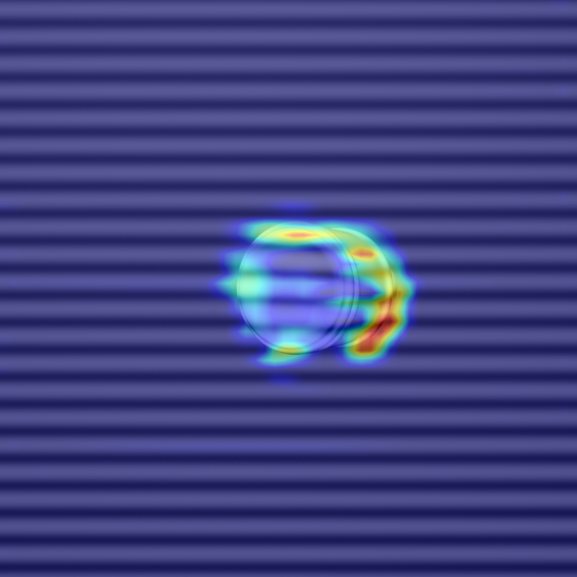}
        \caption{bottleneck}
    \end{subfigure}\hfill
    \begin{subfigure}[t]{0.13\linewidth}
        \centering
        \includegraphics[width=\linewidth]{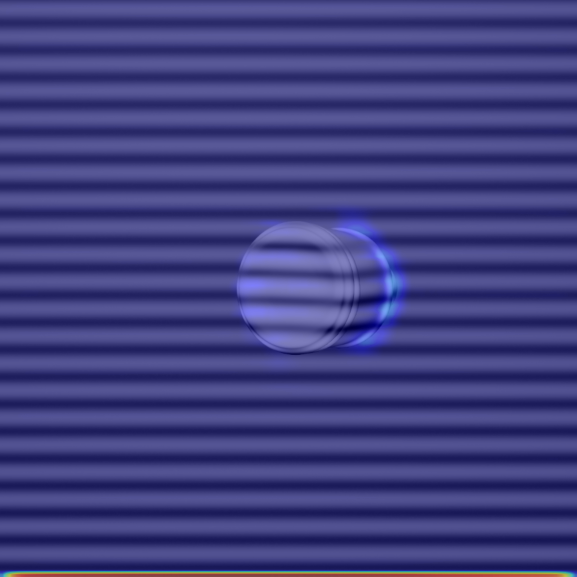}
        \caption{dec1}
    \end{subfigure}\hfill
    \begin{subfigure}[t]{0.13\linewidth}
        \centering
        \includegraphics[width=\linewidth]{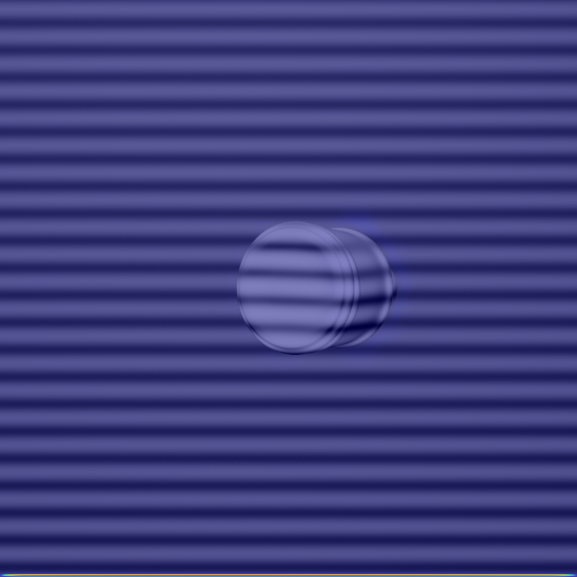}
        \caption{dec3}
    \end{subfigure}\hfill
    \begin{subfigure}[t]{0.13\linewidth}
        \centering
        \includegraphics[width=\linewidth]{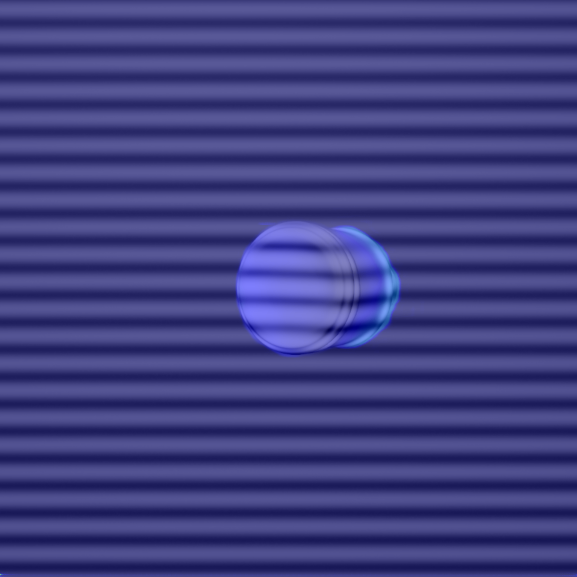}
        \caption{dec4}
    \end{subfigure}
    \caption{Example GradCAM visualization showing network attention across layers for a test sample. The network progressively focuses on object boundaries, with strongest attention at edges in enc4 and dec4 layers.}
    \label{fig:gradcam_sample}
\end{figure*}

Table~\ref{tab:gradcam_results} and Fig.~\ref{fig:gradcam_analysis} show that across 30 test samples, GradCAM correlates more strongly with edges than fringe patterns in five of six layers. However, the absolute correlation values are modest (average r = 0.136 for edges, r = 0.107 for fringes), reflecting the inherent difficulty of correlating a high-dimensional heatmap against a single scalar reference map. The direction of the effect is consistent with the geometric shortcut hypothesis, but the magnitude alone is not conclusive. The result is better interpreted alongside the linear probing and flat plane evidence: the GradCAM data show a systematic \emph{directional} preference for edges over fringes that, combined with the probing asymmetry (2.82$\times$) and the OOD failure, builds a coherent picture. Notably, the attention preference is strongest in the final decoder layer (dec4, ratio 1.38), where the network makes its final spatial prediction, and weakest or even reversed at dec3 (ratio $-$0.72), suggesting the network does not attend uniformly to edges throughout. Fig.~\ref{fig:gradcam_sample} shows an example where attention progressively concentrates on object boundaries through the encoder stages.

\subsection{Flat Plane Out-of-Distribution Test}

The most direct way to test whether a network has learned a shortcut versus the intended solution is to present inputs where the shortcut fails but the correct solution should still work. This out-of-distribution (OOD) testing paradigm~\citep{geirhos2020shortcut} is a cornerstone of mechanistic interpretability: by carefully constructing test cases that dissociate the shortcut from the correct answer, we can definitively determine which strategy the network employs.

For our hypothesis, the critical test case is an image containing valid fringe patterns but \emph{no geometric features}, specifically a flat, featureless plane. If the network learned physics-based fringe decoding, it should correctly predict uniform depth corresponding to the plane's distance; the fringe pattern encodes this depth regardless of whether edges are present. However, if the network relies on geometric shortcuts, it should fail catastrophically: with no edges to detect, the shape-based strategy has no basis for prediction.

We captured a flat plane at 1.8~m, well within the trained depth range (Fig.~\ref{fig:flat_plane_test}). A physics-aware network should predict \emph{uniform} depth across the plane corresponding to its distance from the projector. The key diagnostic is whether the network recognizes that a flat plane should produce flat depth.

The results are striking. Instead of predicting a uniform depth that reflects the plane's actual 1.8~m distance, the network predicts near-zero depth across most of the surface (mean = 0.088 normalized units), effectively classifying the flat plane itself as background. The prediction range extends to [$-$1.19, 4.38], far outside the valid [0,~1] training range, with substantial portions of the image predicting physically impossible negative depth values. This behavior is consistent with a network that has learned to associate ``no internal shape features'' with ``no object,'' defaulting to background depth whenever the geometric shortcut has nothing to latch onto.

GradCAM analysis (Fig.~\ref{fig:gradcam_flat_plane}) provides additional context. Encoder layers (enc3, enc4) show strong activation across the fringes \emph{within} the rectangular front plane, with comparatively little response to the fringes on the surrounding background. In contrast, the decoder layers (dec1, dec3, dec4) show minimal spatial attention. The network uses the rectangular boundary to define a region of interest and processes the fringe pattern selectively within it, yet still fails to convert that fringe information into a uniform depth estimate consistent with a flat surface. This is consistent with how the network processes images of complex objects (Table~\ref{tab:gradcam_results}), where edge-correlated attention defines an object region and shape-based depth filling proceeds within it. With no learned internal shape template to fill in for a featureless plane, the network's pipeline collapses to a near-zero (background) prediction even though encoder-stage attention is locally engaged with the fringe pattern.

\begin{figure}[pos=tbp]
    \centering
    \begin{subfigure}[c]{0.42\linewidth}
        \centering
        \includegraphics[width=\linewidth]{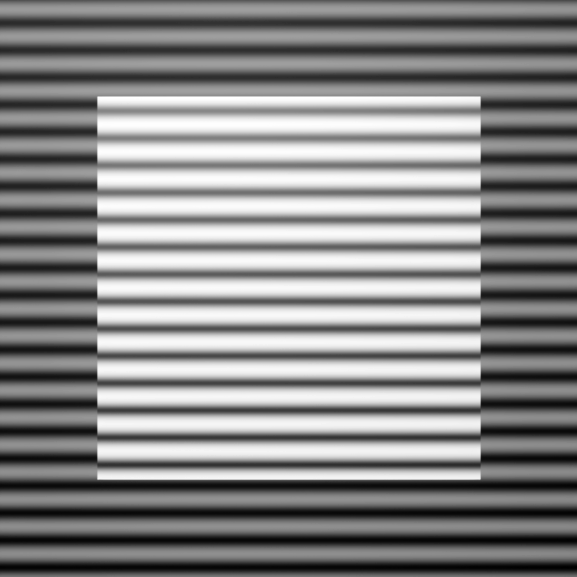}
        \caption{Input fringe image}
    \end{subfigure}\hfill
    \begin{subfigure}[c]{0.48\linewidth}
        \centering
        \includegraphics[width=\linewidth]{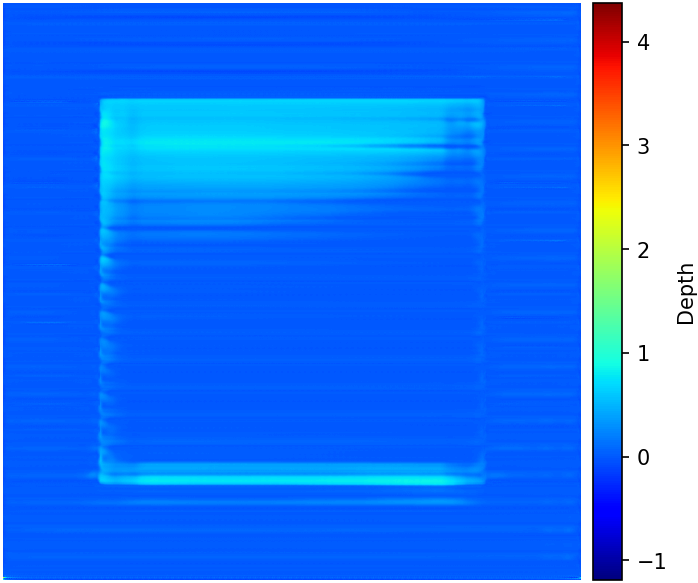}
        \caption{Network prediction (range: [-1.19, 4.38])}
    \end{subfigure}
    \caption{Flat plane out-of-distribution test. Left: Input fringe image of a flat plane captured at 1.8~m, within the trained depth range. Right: Network prediction showing a non-uniform depth map clustered near zero (mean = 0.088) with an extreme range [$-$1.19, 4.38] far outside the valid [0,~1] training range, indicating the network fails to recognize the flat surface and defaults to background-level predictions across most of it.}
    \label{fig:flat_plane_test}
\end{figure}

\begin{figure*}[pos=tp]
    \centering
    \begin{subfigure}[t]{0.13\linewidth}
        \centering
        \includegraphics[width=\linewidth]{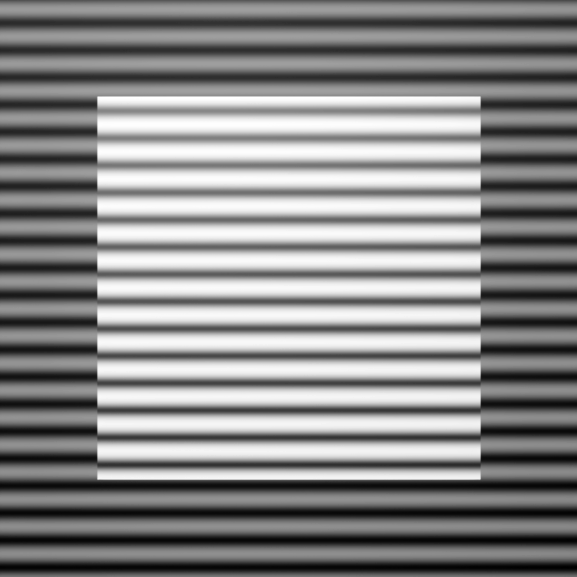}
        \caption{Input}
    \end{subfigure}\hfill
    \begin{subfigure}[t]{0.13\linewidth}
        \centering
        \includegraphics[width=\linewidth]{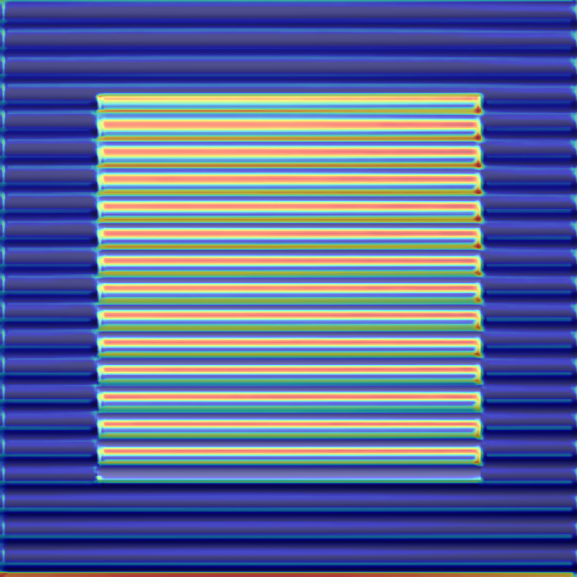}
        \caption{enc3}
    \end{subfigure}\hfill
    \begin{subfigure}[t]{0.13\linewidth}
        \centering
        \includegraphics[width=\linewidth]{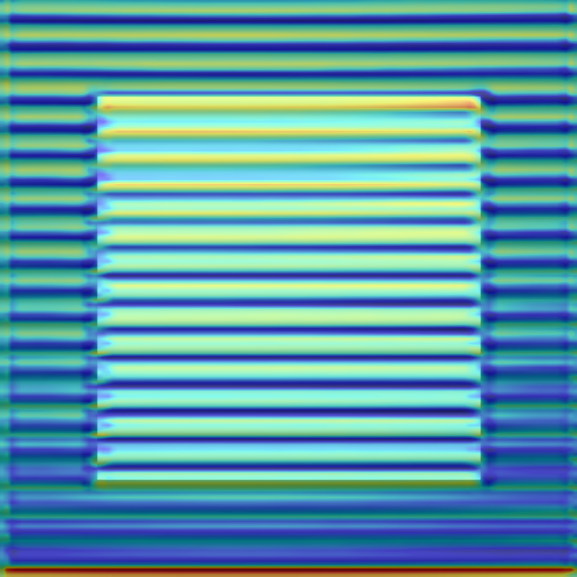}
        \caption{enc4}
    \end{subfigure}\hfill
    \begin{subfigure}[t]{0.13\linewidth}
        \centering
        \includegraphics[width=\linewidth]{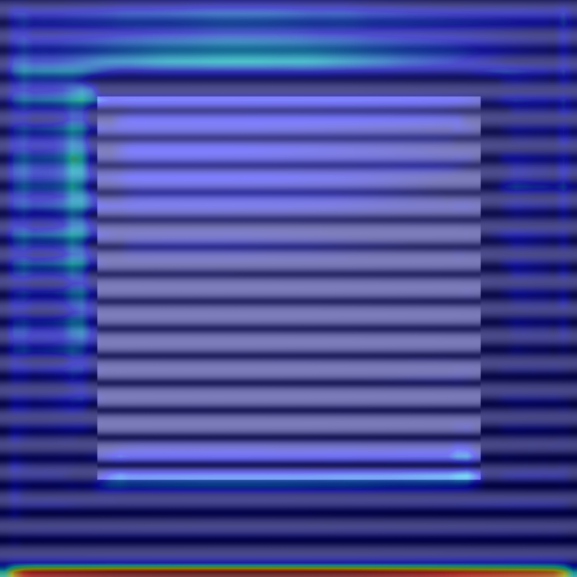}
        \caption{bottleneck}
    \end{subfigure}\hfill
    \begin{subfigure}[t]{0.13\linewidth}
        \centering
        \includegraphics[width=\linewidth]{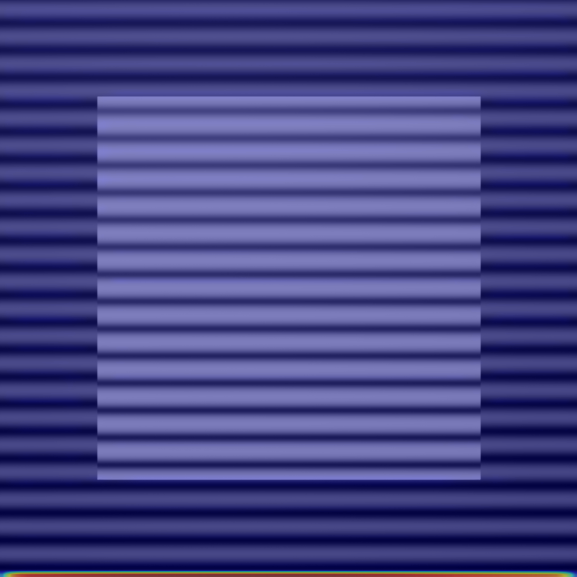}
        \caption{dec1}
    \end{subfigure}\hfill
    \begin{subfigure}[t]{0.13\linewidth}
        \centering
        \includegraphics[width=\linewidth]{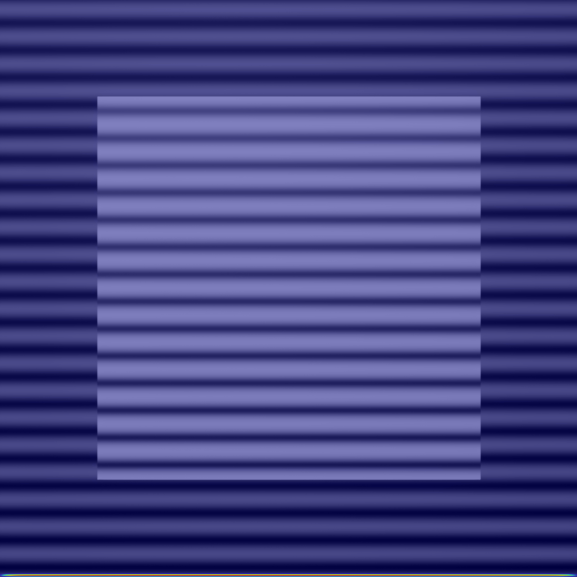}
        \caption{dec3}
    \end{subfigure}\hfill
    \begin{subfigure}[t]{0.13\linewidth}
        \centering
        \includegraphics[width=\linewidth]{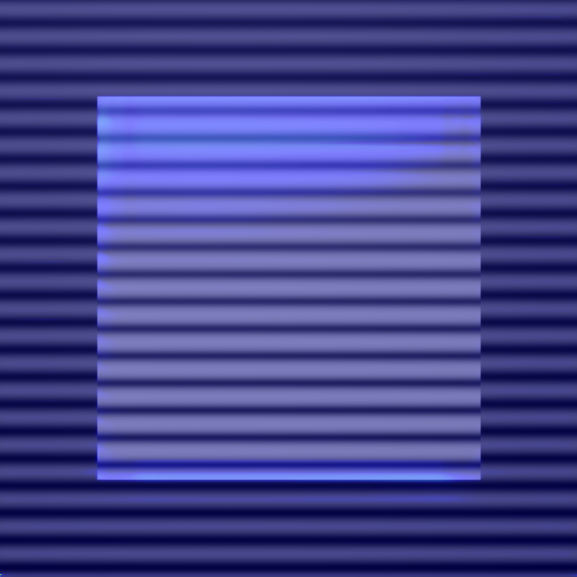}
        \caption{dec4}
    \end{subfigure}
    \caption{GradCAM analysis on the flat plane test. Encoder layers (enc3, enc4) activate selectively on the fringes inside the rectangular front plane, ignoring those on the surrounding background. Decoder layers show minimal spatial attention. The network uses the rectangular boundary to define a region of interest but cannot translate the fringe pattern within it into a uniform depth estimate.}
    \label{fig:gradcam_flat_plane}
\end{figure*}

This failure mode directly confirms the shape prior hypothesis: the network implements a two-stage pipeline of (1)~edge detection to identify object boundaries and define a region of interest, followed by (2)~shape-based depth filling within that region. When presented with a flat surface lacking internal geometric structure, the network correctly detects the boundary and engages encoder attention with the fringes inside it, but the depth-filling stage has no learned shape template to map onto and the prediction collapses to background. The fringe pattern's depth information is therefore reaching the encoder but not being translated into uniform depth, despite that information being present and physically sufficient.

\subsection{Implications for Single-Shot FPP}

These three analyses, linear probing, GradCAM attention mapping, and out-of-distribution testing, converge on a consistent mechanistic explanation: the UNet learns geometric shortcuts rather than fringe-to-depth physics. Each technique provides complementary evidence:

\begin{itemize}
    \item \textbf{Linear probing} demonstrates that geometric information (edges) is encoded 2.82$\times$ more explicitly than depth values in the network's internal representations, indicating the network prioritizes shape over physical quantities.
    \item \textbf{GradCAM} reveals that during prediction, the network attends 1.28$\times$ more strongly to geometric boundaries than to fringe intensity patterns, suggesting decisions are based on shape recognition rather than fringe analysis.
    \item \textbf{Flat plane testing} shows that when presented with a flat surface lacking internal shape features, the encoder selectively attends to the fringes inside the bounded region but the network still collapses to a near-zero (background) prediction across the surface, indicating it cannot translate fringe information into uniform depth without a learned shape template to fill in.
\end{itemize}

This mechanistic understanding explains why reconstruction error (14.54~mm, 18\% of object depth range) remains high despite extensive architectural optimization: the network predicts depth by detecting object boundaries and filling with learned shape templates, not by extracting phase information from fringe patterns. The resulting predictions capture coarse object geometry but lack the fine-grained precision achievable through physics-based phase demodulation. Read against Section~\ref{sec:singleshot_theory}, the diagnosis takes a sharper form: Proposition~\ref{prop:noninjective} shows that the mapping the network was asked to learn does not uniquely exist without fringe-order information, and the shortcut documented here is the specific way an optimizer trained on $\ell_1$/$\ell_2$ regression resolves that non-injectivity (by collapsing onto whatever shape-prior surrogate the training distribution makes available).

These findings have broader implications for the application of deep learning to optical metrology. Neural networks are powerful pattern recognizers, but they exploit whatever statistical regularities lead to low training loss, not necessarily the physical principles we intend them to learn. In our case, correlations between object shapes and their depth profiles provide an easier learning signal than the mathematically precise but spatially distributed fringe-to-depth relationship.

Our results motivate hybrid approaches that explicitly incorporate physical constraints: either through physics-informed loss functions that penalize violations of fringe optics, network architectures that include differentiable phase extraction modules, or training procedures that encourage fringe pattern utilization. Simply providing more data or larger models is unlikely to overcome the fundamental tendency toward geometric shortcut learning demonstrated here. The diagnosis points specifically to the second of these: an architecture whose output space is wrapped phase rather than depth and whose only path from that output to depth runs through a fixed calibration layer, so that the shape-prior solution is removed from the hypothesis space by construction rather than discouraged by a penalty. Designing and verifying such an architecture is the subject of follow-on work.

\section{Conclusion}
\label{sec:conclusion}

Building on FPP-ML-Bench~\citep{haroon2026fppml}, an open photorealistic synthetic benchmark for learning-based single-shot fringe projection profilometry, we asked why the best single-shot baseline plateaus at $\sim 15$~mm object MAE and answered the question mechanistically. Long range is the operating regime rather than the scope of the finding: the shortcut-learning failure we diagnose is a property of single-shot FPP in general, and long range is where it is most visible because the alternative phase-decoding solution is also hardest there.

\noindent\textbf{Theoretical limit.} We formalized the single-shot ambiguity (Section~\ref{sec:singleshot_theory}): the fringe-to-depth mapping is non-injective without fringe-order information, and the depth error induced by an incorrect fringe order grows approximately as $Z^2$ in the working distance. This accounts on theoretical grounds for why single-shot FPP becomes increasingly ill-posed beyond roughly one meter, and it predicts that end-to-end depth regression from a single fringe cannot be well-posed without additional phase or fringe-order information.

\noindent\textbf{Baseline and ablations.} Systematic ablations isolate the design choices that drive single-shot performance. Individual depth normalization improves object MAE 9.1$\times$ over raw depth (16.20 vs.\ 148.07~mm) by decoupling object shape from absolute scale. Removing background fringes degrades performance 2.8--7.3$\times$ across normalizations, establishing that background fringe content is signal rather than noise; gradient-level masking schemes inherit the same failure mode for the same reason, because both physical removal and gradient masking eliminate the spatial phase reference the network needs. Expanding the training set with 18 phase-shifted frames from a single fringe orientation reduces object MAE by 20--24\% relative to the single-frame baseline, while mixing horizontal and vertical orientations or including non-sinusoidal patterns degrades performance, indicating that phase diversity within one orientation is useful augmentation but orientation diversity introduces conflicting gradients. Among six L1/L2-based losses, Hybrid L1 at $\alpha=0.7$ achieves the best object accuracy (14.54~mm); masked losses without a global regularization term fail catastrophically due to scale drift. UNet outperforms TransUNet by 29\%, ResUNet by 60\%, and Pix2Pix by 91\%; the 1.9$\times$ spread across architectures combined with the 14.54~mm residual (18\% of the 80~mm object depth range) points to a representational rather than a capacity-bound limit. Pix2Pix's specific failure, visually plausible predictions with 20--40~mm systematic offsets, reveals that adversarial training optimizes for perceptual plausibility rather than point-wise metrological accuracy.

\noindent\textbf{Mechanistic diagnosis.} Three complementary mechanistic interpretability probes, to our knowledge the first MI study of an FPP network, converge on the same finding: the best UNet baseline does not solve the task by decoding phase from fringe intensity but by detecting object boundaries and filling in learned shape templates. Linear probing of intermediate features shows edges are 2.82$\times$ more linearly readable than depth, with the asymmetry tightest in the decoder where the network actually produces its output. Grad-CAM attribution shows the network attends 1.28$\times$ more strongly to object-boundary pixels than to surrounding fringe texture. An out-of-distribution probe presenting a flat featureless plane at 1.8~m within the trained depth range produces non-uniform, near-zero depth predictions across the surface: encoder attention engages with the fringes inside the rectangular bounded region but the network cannot translate that fringe information into uniform depth without a shape template to fill in. This residual is not a fitting problem the optimizer has not yet solved; it is the consequence of a hypothesis space that contains a lower-loss shortcut, and additional data or larger models will not close it because they do not change the hypothesis space the optimizer searches.

\noindent\textbf{Toward a repair.} The diagnosis has a direct architectural implication. Because the shortcut is available whenever the network outputs depth directly, discouraging it through a loss penalty is unlikely to suffice; the shape-prior solution has to be removed from the hypothesis space by construction. Routing reconstruction through wrapped phase and fringe order, so that depth is a deterministic function of a predicted phase field rather than a directly learned quantity, is the intervention the analysis points to.

\noindent\textbf{Other directions.} Beyond the architectural repair, several directions follow from the evidence above. Phase-guided learning that supplies wrapped or unwrapped phase as input or intermediate supervision is a direct intervention against the boundary-shortcut diagnosis. Learned components restricted to post-processing of traditional phase reconstructions (denoising, hole-filling, outlier removal) are consistent with what the networks in our architecture comparison actually do well. Multi-view fusion exploiting the 6 viewpoints per object available in FPP-ML-Bench provides genuine additional depth information rather than additional shape-prior evidence. Sim-to-real transfer via domain adaptation and domain randomization~\citep{tobin2017domain} from the VIRTUS-FPP framework to physical hardware would extend the methodology to challenging materials (specular, translucent) and lighting conditions. The shape-prior shortcut diagnosed in Section~\ref{sec:interpretability} is a general property of single-shot deep learning on FPP geometry rather than an artifact of the simulation pathway: it is the same phenomenon Geirhos et al.~\citep{geirhos2020shortcut} documented across vision, and it inherits from the optimizer exploiting the available hypothesis space. VIRTUS-FPP's physical validation concerns the forward model, that the simulator reproduces the fringe formation and reconstruction behavior of a real FPP system, which is what makes the reported error magnitudes physically meaningful; it does not by itself establish that a network trained only on synthetic data transfers to hardware without adaptation, which we do not claim and leave to the sim-to-real work noted above. The diagnosis itself, being a property of the single-shot information deficit and of FPP geometry, would arise on any single-shot FPP setup where shape priors are exploitable.

\noindent We adopt FPP-ML-Bench as a shared baseline rather than as a verdict. Prior single-shot FPP work has been evaluated largely on bespoke datasets with per-paper choices of normalization, loss, evaluation region, and error statistic, which makes cross-study comparison unreliable and hides which design decisions actually matter. The dataset, the object/background/overall evaluation protocol, and the explicit reporting conventions for normalization, loss, masking, and training-set composition together define a reproducible and transparent evaluation surface for single-shot FPP. Our specific findings (individual normalization, background fringes as signal, single-orientation multi-frame gains, Hybrid L1 at $\alpha=0.7$, UNet over the larger transformer and residual variants, and the mechanistic evidence that the baseline decodes shape rather than phase) are offered as falsifiable baselines on that shared surface, to be confirmed, refined, or overturned as the community reports results against the same protocol.


\section*{Declaration of Competing Interest}
The authors declare that they have no known competing financial interests or personal relationships that could have appeared to influence the work reported in this paper.

\section*{Code, Data, and Materials Availability}
The FPP-ML-Bench dataset used in this work is publicly available at \linkable{https://huggingface.co/datasets/aharoon/fpp-ml-bench}. Code will be released upon acceptance.

\section*{Acknowledgments}
We thank Iowa State University for access to computational resources.

\printcredits

\bibliographystyle{cas-model2-names}
\bibliography{report}

@book{zhang2016high,
  title={High-Speed 3D Imaging with Digital Fringe Projection Techniques},
  author={Zhang, Song},
  year={2016},
  publisher={CRC Press},
  edition={1st},
  doi={10.1201/b19565}
}

@article{geng2011structured,
  title={Structured-light 3D surface imaging: a tutorial},
  author={Geng, Jason},
  journal={Advances in optics and photonics},
  volume={3},
  number={2},
  pages={128--160},
  year={2011},
  publisher={Optical Society of America}
}

@article{zhang2010recent,
  title={Recent progresses on real-time 3D shape measurement using digital fringe projection techniques},
  author={Zhang, Song},
  journal={Optics and lasers in engineering},
  volume={48},
  number={2},
  pages={149--158},
  year={2010},
  publisher={Elsevier}
}

@article{van2019deep,
  title={Deep neural networks for single shot structured light profilometry},
  author={Van der Jeught, Sam and Dirckx, Joris JJ},
  journal={Optics express},
  volume={27},
  number={12},
  pages={17091--17101},
  year={2019},
  publisher={Optical Society of America}
}

@article{zuo2022deep,
  title={Deep learning in optical metrology: a review},
  author={Zuo, Chao and Qian, Jiaming and Feng, Shijie and Yin, Wei and Li, Yixuan and Fan, Pengfei and Han, Jing and Qian, Kemao and Chen, Qian},
  journal={Light: Science \& Applications},
  volume={11},
  number={1},
  pages={39},
  year={2022},
  publisher={Nature Publishing Group UK London}
}

@article{zhu2022hformer,
  title={Hformer: Hybrid convolutional neural network transformer network for fringe order prediction in phase unwrapping of fringe projection},
  author={Zhu, Xinjun and Han, Zhiqiang and Yuan, Mengkai and Guo, Qinghua and Wang, Hongyi and Song, Limei},
  journal={Optical Engineering},
  volume={61},
  number={9},
  pages={093107--093107},
  year={2022},
  publisher={Society of Photo-Optical Instrumentation Engineers}
}

@article{wang2021single,
  title={Single-shot fringe projection profilometry based on deep learning and computer graphics},
  author={Wang, Fanzhou and Wang, Chenxing and Guan, Qingze},
  journal={Optics Express},
  volume={29},
  number={6},
  pages={8024--8040},
  year={2021},
  publisher={Optical Society of America}
}

@article{nguyen2020single,
  title={Single-shot 3D shape reconstruction using structured light and deep convolutional neural networks},
  author={Nguyen, Hieu and Wang, Yuzeng and Wang, Zhaoyang},
  journal={Sensors},
  volume={20},
  number={13},
  pages={3718},
  year={2020},
  publisher={MDPI}
}

@article{li2025deep,
  title={Deep-learning-enabled single-shot fringe projection profilometry based on inner shifting-phase encoding},
  author={Li, Jinlong and Zhang, Kuo and Luo, Lin and Liu, Gaokun and Tang, Tao and Wang, Zhijie and Wan, Yingying},
  journal={Optics Express},
  volume={33},
  number={23},
  pages={49530--49550},
  year={2025},
  publisher={Optica Publishing Group}
}

@article{wang2025end,
  title={End-to-end single-shot composite color FPP network for multiple separated objects reconstruction},
  author={Wang, Lianpo and Xing, Yanyang},
  journal={Measurement},
  volume={246},
  pages={116697},
  year={2025},
  publisher={Elsevier}
}

@article{zheng2020fringe,
  author={Y. Zheng and S. Wang and Q. Li and B. Li},
  title={Fringe projection profilometry by conducting deep learning from its digital twin},
  journal={Optics Express},
  volume={28},
  number={24},
  pages={36568--36583},
  year={2020},
  publisher={Optica Publishing Group}
}

@article{ueda2021fringe,
  title={Fringe projection profilometry system verification for 3D shape measurement using virtual space of game engine},
  author={Ueda, Kazumasa and Ikeda, Kanami and Koyama, Osanori and Yamada, Makoto},
  journal={Optical Review},
  volume={28},
  number={6},
  pages={723--729},
  year={2021},
  publisher={Springer}
}

@article{zhang2023measurement,
  title={Measurement Simulation System of Fringe Projection Profilometry Based on Ray Tracing},
  author={Zhang, Qiushuang and Xing, Mingyi and Li, Hongbin and Li, Xu and Wang, Tingli},
  journal={IEEE Access},
  volume={11},
  pages={89616--89624},
  year={2023},
  publisher={IEEE}
}

@article{ikeda2025deep,
  title={Deep-learning-assisted single-shot 3D shape and color measurement using color fringe projection profilometry},
  author={Ikeda, Kanami and Usuki, Takahiro and Kurita, Yumi and Matsueda, Yuya and Koyama, Osanori and Yamada, Makoto},
  journal={Optical Review},
  pages={1--12},
  year={2025},
  publisher={Springer}
}

@ARTICLE{HaroonVIRTUS2025,
  author={Haroon, Adam and Lakshman, Anush and Balasubramaniam, Badrinath and Li, Beiwen},
  journal={IEEE Sensors Journal}, 
  title={VIRTUS-FPP: Virtual Sensor Modeling for Fringe Projection Profilometry in NVIDIA Isaac Sim}, 
  year={2026},
  volume={},
  number={},
  pages={1-1},
  doi={10.1109/JSEN.2026.3698278}
}

@article{qian2021high,
  title={High-resolution real-time 360\textdegree{} 3D surface defect inspection with fringe projection profilometry},
  author={Qian, Jiaming and Feng, Shijie and Xu, Mingzhu and Tao, Tianyang and Shang, Yuhao and Chen, Qian and Zuo, Chao},
  journal={Optics and Lasers in Engineering},
  volume={137},
  pages={106382},
  year={2021},
  publisher={Elsevier}
}

@inproceedings{deng2016three,
  title={Three-dimensional surface inspection for semiconductor components with fringe projection profilometry},
  author={Deng, Fuqin and Ding, Yi and Peng, Kai and Xi, Jiangtao and Yin, Yongkai and Zhu, Ziqi},
  booktitle={Optical Metrology and Inspection for Industrial Applications IV},
  volume={10023},
  pages={175--186},
  year={2016},
  organization={SPIE}
}

@article{zhang2023machine,
  title={Machine learning enhanced high dynamic range fringe projection profilometry for in-situ layer-wise surface topography measurement during LPBF additive manufacturing},
  author={Zhang, Haolin and Vallabh, Chaitanya Krishna Prasad and Zhao, Xiayun},
  journal={Precision Engineering},
  volume={84},
  pages={1--14},
  year={2023},
  publisher={Elsevier}
}

@article{zhang2022systematic,
  title={A systematic study and framework of fringe projection profilometry with improved measurement performance for in-situ LPBF process monitoring},
  author={Zhang, Haolin and Vallabh, Chaitanya Krishna Prasad and Xiong, Yubo and Zhao, Xiayun},
  journal={Measurement},
  volume={191},
  pages={110796},
  year={2022},
  publisher={Elsevier}
}

@inproceedings{haroon2024autonomous,
  title={Autonomous robotic 3D scanning for smart factory planning},
  author={Haroon, Adam and Lakshman, Anush and Mundy, Micah and Li, Beiwen},
  booktitle={Dimensional Optical Metrology and Inspection for Practical Applications XIII},
  volume={13038},
  pages={110--118},
  year={2024},
  organization={SPIE}
}

@article{wang2024robotic,
  title={Robotic measurement system based on cooperative optical profiler integrating fringe projection with photometric stereo for highly reflective workpiece},
  author={Wang, Xi and Shen, Yijun and Jian, Zhenxiong and Wen, Daizhou and Zhang, Xinquan and Zhu, LiMin and Ren, Mingjun},
  journal={Robotics and Computer-Integrated Manufacturing},
  volume={88},
  pages={102739},
  year={2024},
  publisher={Elsevier}
}

@inproceedings{Ronneberger2015,
  title={{U-Net}: Convolutional Networks for Biomedical Image Segmentation},
  author={Ronneberger, Olaf and Fischer, Philipp and Brox, Thomas},
  booktitle={Medical Image Computing and Computer-Assisted Intervention (MICCAI)},
  pages={234--241},
  year={2015},
  publisher={Springer}
}

@inproceedings{Isola2017,
  title={Image-to-Image Translation with Conditional Adversarial Networks},
  author={Isola, Phillip and Zhu, Jun-Yan and Zhou, Tinghui and Efros, Alexei A},
  booktitle={Proceedings of the IEEE Conference on Computer Vision and Pattern Recognition (CVPR)},
  pages={1125--1134},
  year={2017}
}

@inproceedings{Wang2018,
  title={High-Resolution Image Synthesis and Semantic Manipulation with Conditional {GAN}s},
  author={Wang, Ting-Chun and Liu, Ming-Yu and Zhu, Jun-Yan and Tao, Andrew and Kautz, Jan and Catanzaro, Bryan},
  booktitle={Proceedings of the IEEE Conference on Computer Vision and Pattern Recognition (CVPR)},
  pages={8798--8807},
  year={2018}
}

@inproceedings{balasubramaniam2023single,
  title={Single Shot 3D Shape Measurement of Non-Volatile Data Storage Devices},
  author={Balasubramaniam, Badrinath and Li, Beiwen},
  booktitle={International Manufacturing Science and Engineering Conference},
  volume={87240},
  pages={V002T06A010},
  year={2023},
  organization={American Society of Mechanical Engineers}
}

@inproceedings{deng2009imagenet,
  title={{ImageNet}: A large-scale hierarchical image database},
  author={Deng, Jia and Dong, Wei and Socher, Richard and Li, Li-Jia and Li, Kai and Fei-Fei, Li},
  booktitle={2009 IEEE conference on computer vision and pattern recognition},
  pages={248--255},
  year={2009},
  organization={IEEE}
}

@inproceedings{lin2014microsoft,
  title={Microsoft {COCO}: Common objects in context},
  author={Lin, Tsung-Yi and Maire, Michael and Belongie, Serge and Hays, James and Perona, Pietro and Ramanan, Deva and Doll{\'a}r, Piotr and Zitnick, C Lawrence},
  booktitle={Computer Vision--ECCV 2014: 13th European Conference},
  pages={740--755},
  year={2014},
  organization={Springer}
}

@article{calli2017yale,
  title={Yale-CMU-Berkeley dataset for robotic manipulation research},
  author={Calli, Berk and Singh, Arjun and Bruce, James and Walsman, Aaron and Konolige, Kurt and Srinivasa, Siddhartha and Abbeel, Pieter and Dollar, Aaron M},
  journal={The International Journal of Robotics Research},
  volume={36},
  number={3},
  pages={261--268},
  year={2017},
  publisher={SAGE Publications Sage UK: London, England}
}

@misc{Nvidia2025PhysicalAIWarehouse,
  title = {{Physical AI Spatial Intelligence Warehouse Dataset}},
  author = {{NVIDIA}},
  year = {2025},
  howpublished = {\url{https://huggingface.co/datasets/nvidia/PhysicalAI-Spatial-Intelligence-Warehouse}},
  note = {Accessed: 2026-01-12},
}

@article{sansoni1999three,
  title={Three-dimensional vision based on a combination of Gray-code and phase-shift light projection: analysis and compensation of the systematic errors},
  author={Sansoni, Giovanna and Trebeschi, Marco and Docchio, Franco},
  journal={Applied optics},
  volume={38},
  number={31},
  pages={6565--6573},
  year={1999},
  publisher={Optica Publishing Group}
}

@article{feng2021calibration,
  title={Calibration of fringe projection profilometry: A comparative review},
  author={Feng, Shijie and Zuo, Chao and Zhang, Liang and Tao, Tianyang and Hu, Yan and Yin, Wei and Qian, Jiaming and Chen, Qian},
  journal={Optics and lasers in engineering},
  volume={143},
  pages={106622},
  year={2021},
  publisher={Elsevier}
}

@article{ma16155443,
  author={Zapico, Pablo and Meana, Victor and Cuesta, Eduardo and Mateos, Sabino},
  title={Optical Characterization of Materials for Precision Reference Spheres for Use with Structured Light Sensors},
  journal={Materials},
  volume={16},
  number={15},
  pages={5443},
  year={2023},
  doi={10.3390/ma16155443}
}

@article{mi13101607,
  author={Ou, Jia and Xu, Tingfa and Gan, Xiaochuan and He, Xuejun and Li, Yan and Qu, Jiansu and Zhang, Wei and Cai, Cunliang},
  title={Comparative Analysis on the Effect of Surface Reflectance for Laser {3D} Scanner Calibrator},
  journal={Micromachines},
  volume={13},
  number={10},
  pages={1607},
  year={2022},
  doi={10.3390/mi13101607}
}

@inproceedings{tobin2017domain,
  title={Domain randomization for transferring deep neural networks from simulation to the real world},
  author={Tobin, Josh and Fong, Rachel and Ray, Alex and Schneider, Jonas and Zaremba, Wojciech and Abbeel, Pieter},
  booktitle={2017 IEEE/RSJ international conference on intelligent robots and systems (IROS)},
  pages={23--30},
  year={2017},
  organization={IEEE}
}

@article{chen2021transunet,
  title={Transunet: Transformers make strong encoders for medical image segmentation},
  author={Chen, Jieneng and Lu, Yongyi and Yu, Qihang and Luo, Xiangde and Adeli, Ehsan and Wang, Yan and Lu, Le and Yuille, Alan L and Zhou, Yuyin},
  journal={arXiv preprint arXiv:2102.04306},
  year={2021}
}

@article{olah2020zoom,
  title={Zoom In: An Introduction to Circuits},
  author={Olah, Chris and Cammarata, Nick and Schubert, Ludwig and Goh, Gabriel and Petrov, Michael and Carter, Shan},
  journal={Distill},
  volume={5},
  number={3},
  pages={e00024.001},
  year={2020},
  doi={10.23915/distill.00024.001}
}

@article{alain2016understanding,
  title={Understanding intermediate layers using linear classifier probes},
  author={Alain, Guillaume and Bengio, Yoshua},
  journal={arXiv preprint arXiv:1610.01644},
  year={2016}
}

@inproceedings{selvaraju2017grad,
  title={Grad-{CAM}: Visual explanations from deep networks via gradient-based localization},
  author={Selvaraju, Ramprasaath R. and Cogswell, Michael and Das, Abhishek and Vedantam, Ramakrishna and Parikh, Devi and Batra, Dhruv},
  booktitle={Proceedings of the IEEE International Conference on Computer Vision (ICCV)},
  pages={618--626},
  year={2017}
}

@article{geirhos2020shortcut,
  title={Shortcut learning in deep neural networks},
  author={Geirhos, Robert and Jacobsen, J{\"o}rn-Henrik and Michaelis, Claudio and Zemel, Richard and Brendel, Wieland and Bethge, Matthias and Wichmann, Felix A.},
  journal={Nature Machine Intelligence},
  volume={2},
  number={11},
  pages={665--673},
  year={2020},
  publisher={Nature Publishing Group}
}

@article{evans2024benchmark,
  title={Deep learning for single-shot structured light profilometry: A comprehensive dataset and performance analysis},
  author={Evans, Rhys G and Devlieghere, Ester and Keijzer, Robrecht and Dirckx, Joris JJ and Van der Jeught, Sam},
  journal={Journal of Imaging},
  volume={10},
  number={8},
  pages={179},
  year={2024},
  publisher={MDPI}
}

@inproceedings{haroon2026fppml,
  author = {Anush Lakshman S. and Adam Haroon and Beiwen Li},
  title = {{Comprehensive machine learning benchmarking for fringe projection profilometry with photorealistic synthetic data}},
  volume = {13904},
  booktitle = {Photonic Instrumentation Engineering XIII},
  editor = {Lynda E. Busse and Yakov Soskind},
  organization = {International Society for Optics and Photonics},
  publisher = {SPIE},
  pages = {1390402},
  year = {2026},
  doi = {10.1117/12.3082257},
  URL = {https://doi.org/10.1117/12.3082257}
}

@article{zuo2025deep,
  title={Deep-learning-based endoscopic single-shot fringe projection profilometry},
  author={Zuo, Ruizhi and Wei, Shuwen and Wang, Yaning and Huang, Ruichen and Rodgers, Wayne Wonseok and Yu, Jinglun and Hsieh, Michael H and Krieger, Axel and Kang, Jin U},
  journal={Journal of biomedical optics},
  volume={30},
  number={8},
  pages={086003--086003},
  year={2025},
  publisher={Society of Photo-Optical Instrumentation Engineers}
}

\end{document}